\theoremstyle{plain}
\newtheorem{proposition}{Proposition}[section]
\theoremstyle{definition}
\newtheorem{definition}{Definition}[section]
\theoremstyle{remark}
\newtheorem{remark}{Remark}[section]
\DeclareMathOperator*{\argmin}{arg\,min}
\icmltitlerunning{Explaining Time Series Predictions with Dynamic Masks}
\begin{document}

\twocolumn[
\icmltitle{Explaining Time Series Predictions with Dynamic Masks}



\icmlsetsymbol{equal}{*}

\begin{icmlauthorlist}
\icmlauthor{Jonathan Crabbé}{cam}
\icmlauthor{Mihaela van der Schaar}{cam,ucla,ati}

\end{icmlauthorlist}

\icmlaffiliation{cam}{DAMTP, University of Cambridge, UK}
\icmlaffiliation{ucla}{University of California Los Angeles, USA}
\icmlaffiliation{ati}{The Alan Turing Institute, UK}

\icmlcorrespondingauthor{Jonathan Crabbé}{jc2133@cam.ac.uk}
\icmlcorrespondingauthor{Mihaela van der Schaar}{mv472@cam.ac.uk}

\icmlkeywords{Machine Learning, ICML}

\vskip 0.3in
]



 \printAffiliationsAndNotice{} 
 
\newcommand{\N}{\mathbb{N}}
\newcommand{\R}{\mathbb{R}}
\newcommand{\partderiv}[2]{\frac{\partial #1}{\partial #2}}

\begin{abstract}
How can we explain the predictions of a machine learning model? When the data is structured as a multivariate time series, this question induces additional difficulties such as the necessity for the explanation to embody the time dependency and the large number of inputs. To address these challenges, we propose dynamic masks (Dynamask). This method produces instance-wise importance scores for each feature at each time step by fitting a perturbation mask to the input sequence. In order to incorporate the time dependency of the data, Dynamask studies the effects of dynamic perturbation operators. In order to tackle the large number of inputs, we propose a scheme to make the feature selection parsimonious~(to select no more feature than necessary) and legible~(a notion that we detail by making a parallel with information theory). With synthetic and real-world data, we demonstrate that the dynamic underpinning of Dynamask, together with its parsimony, offer a neat improvement in the identification of feature importance over time. The modularity of Dynamask makes it ideal as a plug-in to increase the transparency of a wide range of machine learning models in areas such as medicine and finance, where time series are abundant.

\end{abstract}

\section{Introduction and context}
\label{sec-intro}
What do we need to trust a machine learning model? If accuracy is necessary, it might not always be sufficient. With the application of machine learning to critical areas such as medicine, finance and the criminal justice system, the black-box nature of modern machine learning models has appeared as a major hindrance to their large scale deployment~\citep{Caruana2015, Lipton2016, Ching2018}. With the necessity to address this problem, the field of explainable artificial intelligence (XAI) thrived~\citep{BarredoArrieta2020, Das2020, Tjoa2020}.

\textbf{Saliency Methods} Among the many possibilities to increase the transparency of a machine learning model, we focus here on \emph{saliency methods}. The purpose of these methods is to highlight the features in an input that are relevant for a model to issue a prediction. We can distinguish them according to the way they interact with a model to produce importance scores.

\emph{Gradient-based:} These methods use the gradient of the model's prediction with respect to the features to produce importance scores. The premise is the following: if a feature is salient, we expect it to have a big impact on the model's prediction when varied locally. This translates into a big gradient of the model's output with respect to this feature. Popular gradient methods include Integrated Gradient~\citep{Sundararajan2017}, DeepLIFT~\citep{Shrikumar2017} and GradSHAP~\citep{Lundberg2018}. 

\emph{Perturbation-based:} These methods use the effect of a perturbation in the input on the model's prediction to produce importance scores. The premise is similar to the one used in gradient based method. The key difference lies in the way in which the features are varied. If gradient based methods use local variation of the features (according to the gradient), perturbation-based method use the data itself to produce a variation. A first example is Feature Occlusion~\citep{Suresh2017} that replaces a group of features with a baseline. Another example is Feature Permutation that performs individual permutation of features within a batch.

\emph{Attention-based:} For some models, the architecture allows to perform simple explainability tasks, such as determining feature saliency. A popular example, building on the success attention mechanisms~\citep{Vaswani2017}, is the usage of attention layers to produce importance scores~\citep{Choi2016, Song2017, Xu2018, Kwon2018}. 

\emph{Other:} There are some methods that don't clearly fall in one of the above categories. A first example is SHAP~\citep{Lundberg2017}, which attributes importance scores based on Shapley values. Another popular example is LIME in which the importance scores correspond to weights in a local linear model~\citep{Ribeiro2016}. Finally, some methods such as INVASE~\citep{Yoon2019} or ASAC~\citep{Yoon2019b} train a selector network to highlight important features.

\textbf{Time Series Saliency} Saliency methods were originally introduced in the context of image classification~\citep{Simonyan2013}. Since then, most methods have focused on images and tabular data. Very little attention has been given to time series~\citep{BarredoArrieta2020}. A possible explanation for this is the increasing interest in \emph{model agnostic methods}~\citep{Ribeiro2016b}.

Model agnostic methods are designed to be used with a very wide range of models and data structures. In particular, nothing prevents us from using these methods for \emph{Recurrent Neural Networks}~(RNN) trained to handle multivariate time series. For instance, one could compute the Shapley values induced by this RNN for each input $x_{t,i}$ describing a feature $i$ at time $t$. In this configuration, all of these inputs are considered as individual features and the time ordering is forgotten. This approach creates a conceptual problem illustrated in Figure~\ref{illustration-context}.

\begin{figure}[ht]
\vskip 0.2in
\begin{center}
\centerline{\includegraphics[width=0.9\columnwidth]{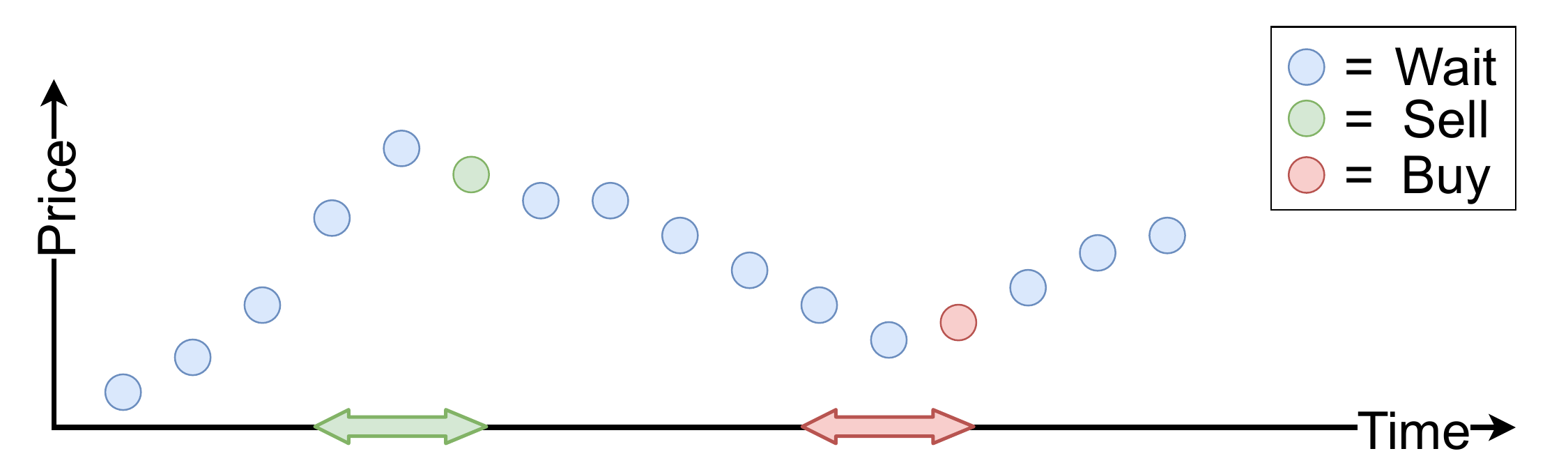}}
\caption{Context matters. This graph shows a fictional security price over time. A simplistic model recommends to sell just after a local maximum, to buy just after a local minimum and to wait otherwise. When recommending to sell or to buy, 3 consecutive time steps are required as a context.}
\label{illustration-context}
\end{center}
\vskip -0.2in
\rule[1ex]{\columnwidth}{0.5pt}
\end{figure}   

In this example, the time dependency is crucial for the model. At each time step, the model provides a decision based on the two previous time steps. This oversimplified example illustrates that the time dependency induces a context, which might be crucial to understand the prediction of some models. In fact, recurrent models are inherently endowed with this notion of context as they rely on memory cells. This might explain the poor performances of model agnostic methods to identify salient features for RNNs~\citep{Ismail2020}. These methods are \emph{static} as the time dependency, and hence the context, is forgotten when treating all the time steps as separate features. 

If we are ready to relax the model agnosticism, we can take a look at attention based methods. In methods such as RETAIN~\citep{Choi2016}, attention weights are interpreted as importance scores. In particular, a high attention weight for a given input indicates that this input is salient for the model to issue a prediction. In this case, the architecture of the model allows us to provide importance scores without altering the dynamic nature of the data. However, it has been shown recently that the attention weights of a model can significantly be changed without inducing any effect on the predictions~\citep{Jain2019}. This renders the parallel between attention weights and feature importance rather unclear. This discrepancy also appears in our experiments.  

Another challenge induced by time series is the large number of inputs. The number of features is multiplied by the number of time steps that the model uses to issue a prediction. A saliency map can therefore become quickly overwhelming in this setting. To address these challenges, it is necessary to incorporate some treatment of \emph{parsimony} and \emph{legibility} in a time series saliency method. By \emph{parsimony}, we mean that a saliency method should not select more inputs than necessary to explain a given prediction. For instance, a method that identifies $90 \%$ of the inputs as salient is not making the interpretation task easier. By \emph{legibility}, we mean that the analysis of the importance scores by the user should be as simple as possible. Clearly, for long time sequences, analysing feature maps such as (c) and (d) on Figure~\ref{illustration-mask_information} can quickly become daunting.  

To address all of these challenges, we introduce Dynamask. It is a \emph{perturbation-based} saliency method building on the concept of \emph{masks} to produce post-hoc explanations for any time series model. Masks have been introduced in the context of image classification~\citep{Fong2017, Fong2019}. In this framework, a mask is fitted to each image. The mask highlights the regions of the image that are salient for the black-box classifier to issue its prediction. These masks are obtained by perturbing the pixels of the original image according to the surrounding pixels and study the impact of such perturbations on the black-box prediction. It has been suggested to extend the usage of masks beyond image classification~\citep{Phillips2018, Ho2019}. However, to our knowledge, no available implementation and quantitative comparison with benchmarks exists in the context of multivariate time series. Moreover, few works in the literature explicitly mention explainability in a multivariate time series setting~\citep{Siddiqui2019, Tonekaboni2020}. Both of these considerations motivate our proposition.

\textbf{Contributions} By building on the challenges that have been described, our work is, to our knowledge, the first saliency method to rigorously address the following questions in a time series setting.\\
\emph{(1) How to incorporate the context?} In our framework, this is naturally achieved by studying the effect of \emph{dynamic} perturbations. Concretely, a perturbation is built for each feature at each time by using the value of this feature at adjacent times. This allows us to build meaningful perturbations that carry contexts such as the one illustrated in Figure~\ref{illustration-context}.\\   
\emph{(2) How to be parsimonious?} A great advantage of using masks is that the notion of parsimony naturally translates into the \emph{extremal} property. Conceptually, an \emph{extremal mask} selects the minimal number of inputs allowing to reconstruct the black-box prediction with a given precision. \\ 
\emph{(3) How to be legible?} To make our masks as simple as possible, we encourage them to be almost binary. Concretely, this means that we enforce a polarization between low and high saliency scores. Moreover, to make the notion of legibility quantitative, we propose a parallel with information theory\footnote{Many connections exist between explainability and information theory, see for instance~\citep{Chen2018}.}. This allows us to introduce two metrics : the mask information and the mask entropy. As illustrated in Figure~\ref{illustration-mask_information}, the entropy can be used to assess the legibility of a given mask. Moreover, these metrics can also be computed for other saliency methods, hence allowing insightful comparisons.

The paper is structured as follows. In Section~\ref{sec-math}, we outline the mathematical formalism related to our method. Then, we evaluate our method by comparing it with several benchmarks in Section~\ref{sec-experiments} and conclude in Section~\ref{sec-conclusion}.

\section{Mathematical formulation}
\label{sec-math}
\begin{figure*}[t]
\vskip 0.2in
\begin{center}
\centerline{\includegraphics[width=0.7\textwidth]{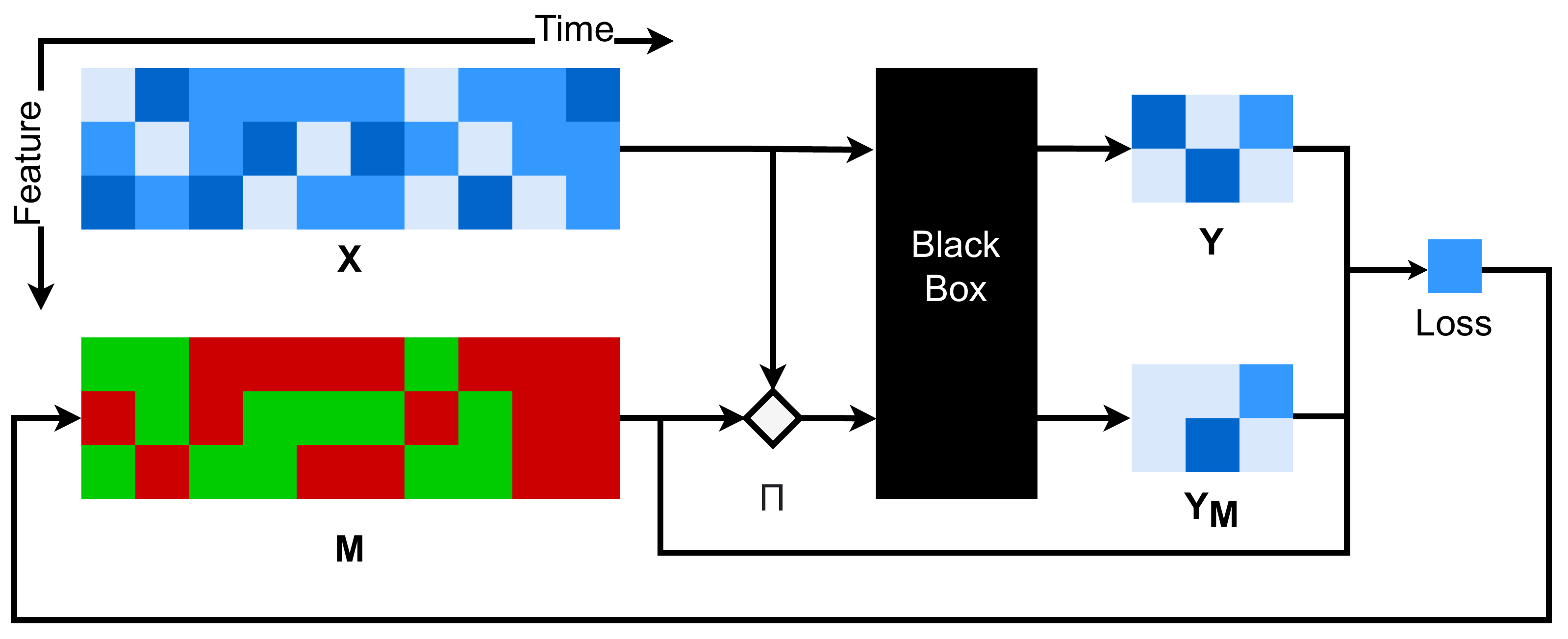}}
\caption{Diagram for Dynamask. An input matrix $\textbf{X}$, extracted from a multivariate time series, is fed to a black-box to produce a prediction $\textbf{Y}$. The objective is to give a saliency score for each component of $\textbf{X}$. In Dynamask, these saliency scores are stored in a mask $\textbf{M}$ of the same shape as the input $\textbf{X}$. To detect the salient information in the input $\textbf{X}$, the mask produces a perturbed version of $\textbf{X}$ via a perturbation operator $\Pi$. This perturbed $\textbf{X}$ is fed to the black-box to produce a perturbed prediction $\textbf{Y}_{\textbf{M}}$. The perturbed prediction is compared to the original prediction and the error is backpropagated to adapt the saliency scores contained in the mask.}
\label{illustration-dynamask}
\end{center}
\vskip -0.2in
\rule[1ex]{\textwidth}{0.5pt}
\end{figure*}

In this section, we formalize the problem of feature importance over time as well as the proposed solution. For the following, it is helpful to have the big picture in mind. Hence, we present the blueprint of Dynamask in Figure~\ref{illustration-dynamask}. The rest of this section makes this construction rigorous.

\subsection{Preliminaries}

Let $\mathcal{X} \subset \mathbb{R}^{d_X}$ be an input (or feature) space and $\mathcal{Y} \subset \mathbb{R}^{d_Y}$ be an output (or label) space, where $d_X$ and $d_Y$ are respectively the dimension of the input and the output space. For the sake of concision, we denote by $[n_1:n_2]$ the set of natural numbers between the natural numbers $n_1$ and $n_2$ with $ n_1 < n_2$. We assume that the data is given in terms of time series $(\textbf{x}_t)_{t \in [1:T]} $, where the inputs $\textbf{x}_t \in \mathcal{X}$ are indexed by a time parameter $t \in [1:T]$ with $T \in \mathbb{N^*}$. We consider the problem of predicting a sequence $(\textbf{y}_t)_{t \in [t_y:T]}$ of outputs $\textbf{y}_t \in \mathcal{Y}$ indexed by the same time parameter, but starting at a time $t_y \geq 1$, hence allowing to cover sequence to vector predictions ($t_y = T$) as well as sequence to sequence predictions ($ t_y < T$). For the following, it is convenient to introduce a matrix notation for these time series. In this way, $\mathbf{X}=(x_{t,i})_{(t,i) \in [1:T]\times [1:d_X]}$ denotes the matrix in $\mathbb{R}^{T \times d_X}$ whose rows correspond to time steps and whose columns correspond to features. Similarly\footnote{In the following, we do not make a distinction between $ \mathbb{R}^{T \times d_X} $ and $ \mathcal{X}^T $ as these are isomorphic vector spaces. The same goes for $ \mathbb{R}^{(T + 1 - t_y) \times d_Y} $ and $ \mathcal{Y}^{T + 1 - t_y} $.} , we denote $\mathbf{Y}=(y_{t,i})_{(t,i) \in [t_y:T]\times [1:d_Y]}$ for the matrix in $\mathbb{R}^{(T+1-t_y) \times d_Y}$.

Our task is to explain the prediction $\textbf{Y} = f(\textbf{X})$ of a black box $f$ that has been pre-trained for the aforementioned prediction task. In other words, our method aims to explain individual predictions of a given black box. More specifically, our purpose is to identify the parts of the input $\textbf{X}$ that are the most relevant for $f$ to produce the prediction $\textbf{Y}$. To do this, we use the concept of masks introduced in \citep{Fong2017, Fong2019}, but in a different context. Here, we adapt the notion of mask to a dynamic setting.

\begin{definition}[Mask]
A mask associated to an input sequence $\textbf{X} \in \mathbb{R}^{T \times d_X}$ and a black box $f :\mathcal{X}^T \rightarrow \mathcal{Y}^{T+1-t_y} $ is a matrix $\textbf{M} = (m_{t,i}) \in [0,1]^{T \times d_X}$ of the same dimension as the input sequence. The element $m_{t,i}$ of this matrix represents the importance of feature $i$ at time $t$ for $f$ to produce the prediction $\textbf{Y} = f(\textbf{X})$.    
\end{definition}
\begin{remark}
For a given coefficient in the mask matrix, a value close to $1$ indicates that the feature is salient, while a value close to $0$ indicates the opposite.
\end{remark}

Now how can we obtain such a mask given a black box and an input sequence? To answer this question, we should let the mask act on the inputs and measure the effect it has on a black box prediction. As the name suggests, we would like this mask to hide irrelevant inputs contained in $\textbf{X}$. To make this rigorous, it is useful to spend some time to think about perturbation operators.

\subsection{Perturbation operators}

The method that we are proposing in this paper is perturbation-based. Concretely, this means that a mask is used to build a \emph{perturbation operator}. Examples of such operators are given in \citep{Fong2019} in the context of image classification. Here, we propose a general definition and we explain how to take advantage of the dynamic nature of the data to build meaningful perturbations. By recalling that the mask coefficients indicate the saliency, we expect the perturbation to vanish for features $x_{t,i}$ whose mask coefficient $m_{t,i}$ is close to one. This motivates the following definition.

\begin{definition}[Perturbation operator] \label{def-perturbation_operator}
A perturbation operator associated to a mask $\textbf{M} \in [0,1]^{T \times d_X}$ is a linear operator acting on the input sequence space $\Pi_{\textbf{M}}~:~\mathbb{R}^{T \times d_X}~\rightarrow~\mathbb{R}^{T \times d_X}$. It needs to fulfil the two following assumptions for any given $(t,i) \in [1:T] \times [1:d_X] $:

1. The perturbation for $x_{t,i}$ is dictated by $m_{t,i}$ :
\begin{align*}
\left[ \Pi_{\textbf{M}}\left( \textbf{X} \right) \right]_{t,i} = \pi \left(\textbf{X} , m_{t,i} \ ; t, i \right),
\end{align*}
where $\pi$ is differentiable for $m \in (0,1)$ and continuous at $m  =0,1$.

2. The action of the perturbation operator is trivial when the mask coefficient is set to one :
\begin{align*}
\pi \left(\textbf{X} ,1 \ ; t, i \right) = x_{t,i}.
\end{align*}

Moreover, we say that the perturbation is \emph{dynamic} if for any given feature at any given time, the perturbation is constructed with the values that this feature takes in neighbouring times. More formally, the perturbation is dynamic if there exist a couple $ (W_1 ,W_2)  \in \N \times \N \setminus \{ (0,0)\}$ such that for all $(t,i) \in [1:T] \times [1:d_X]$ we have:
\begin{align*}
\partderiv{\left[ \pi \left(\textbf{X} , m_{t,i} \ ; t, i \right) \right]}{x_{t',i}} \neq 0  
\end{align*}
for all $ t' \in [t-W_1 : t+W_2] \cap [1:T]$.
\end{definition}
\begin{remark}
The first assumption ensures that the perturbations are applied independently for all inputs and that our method is suitable for gradient-based optimization. The second assumption\footnote{Together with the continuity of $\pi$ with respect to the mask coefficients.} translates the fact that the perturbation should have less effect on salient inputs. 
\end{remark}
\begin{remark}
The parameters $W_1$ and $W_2$ control how the perturbation depends on neighbouring times. A non-zero value for $W_1$ indicates that the perturbation depends on past time steps. A non-zero value for $W_2$ indicates that the perturbation depends on future time steps. For the perturbation to be dynamic, at least one of these parameters has to be different from zero. 
\end{remark}

This definition gives the freedom to design perturbation operators adapted to particular contexts. The method that we develop can be used for any such perturbation operator. In our case, we would like to build our masks by taking advantage the dynamic nature of the data into account. This is crucial if we want to capture local time variations of the features, such as a quick increase of the blood-pressure or an extremal security price from Figure~\ref{illustration-context}. This is achieved by using dynamic perturbation operators. To illustrate, we provide three examples:
\begin{align*}
& \pi^g \left(\textbf{X} , m_{t,i} \ ; t,i \right) = \frac{\sum_{t' = 1}^T x_{t' , i} \cdot g_{\sigma(m_{t,i})}(t-t')}{\sum_{t' = 1}^T g_{\sigma(m_{t,i})}(t-t')}   \\
& \pi^m \left(\textbf{X} , m_{t,i} \ ; t,i \right) = m_{t,i} \cdot x_{t,i} + \left( 1 - m_{t,i} \right)\cdot \mu_{t,i} \\
& \pi^p \left(\textbf{X} , m_{t,i} \ ; t,i \right) = m_{t,i} \cdot x_{t,i} + \left( 1 - m_{t,i} \right)\cdot \mu^p_{t,i} ,
\end{align*}
where $\pi^g$ is a temporal Gaussian blur~\footnote{For this perturbation to be continuous, we assume that $\pi^g(\textbf{X}, 1 \ ; t, i ) \equiv x_{t,i}$.} with
\begin{align*}
g_{\sigma}(t) = \exp \left( - \frac{t^2}{2 \sigma^2}  \right) ; \ \sigma(m) = \sigma_{max} \cdot(1-m).
\end{align*}
 Similarly, $\pi^m$ can be interpreted as a fade to moving average perturbation with
\begin{align*}
\mu_{t,i} = \frac{1}{2W + 1} \sum_{t'=t-W}^{t+W} x_{t',i},
\end{align*}
where $W \in \mathbb{N}$ is the size of the moving window. Finally, $\pi^p$ is similar to $\pi^m$ with one difference: the former only uses past values of the features to compute the perturbation\footnote{This is useful in a typical forecasting setting where the future values of the feature are unknown.}: 
\begin{align*}
\mu^p_{t,i} = \frac{1}{W + 1} \sum_{t'=t-W}^{t} x_{t',i}.
\end{align*}
Note that the Hadamard product $\textbf{M} \odot \textbf{X}$ used in \citep{Ho2019} is a particular case of $\pi^m$ with $\mu_{t,i} = 0=$ so that this perturbation is static. In the following, to stress that a mask is obtained by inspecting the effect of a dynamic perturbation operator, we shall refer to it as a dynamic mask (or Dynamask in short). 

In terms of complexity, the computation of $\pi^g $ requires $\mathcal{O}(d_X \cdot T^2)$ operations~\footnote{One must compute a perturbation for each  $d_X \cdot T$ component of the input and the sums in each perturbation have $T$ terms} while $\pi^m$ requires $\mathcal{O}(d_X \cdot T \cdot W)$ operations~\footnote{One must compute a perturbation for each  $d_X \cdot T$ component of the input and each moving average involves $2W+1$ terms}. When $T$ is big\footnote{In our experiments, however, we never deal with $T > 100$ so that both approaches are reasonable.}, it might therefore be more interesting to use $\pi^m$ with $W \ll T$ or a windowed version of $\pi^g$. With this analysis of perturbation operators, everything is ready to explain how dynamic masks are obtained.   

\subsection{Mask optimization}
\label{subsec-optimization}

To design an objective function for our mask, it is helpful to keep in mind what an ideal mask does. From the previous subsection, it is clear that we should compare the black-box predictions for both the unperturbed and the perturbed input. Ideally, the mask will identify a subset of salient features contained in the input that explains the black-box prediction. Since this subset of features is salient, the mask should indicate to the perturbation operator to preserve it. More concretely, a first part of our objective function should keep the shift in the black-box prediction to be small. We call it the \emph{error} part of our objective function. In practice, the expression for the error part depends on the task done by the black-box. In a regression context, we minimize the squared error between the unperturbed and the perturbed prediction:
\begin{align*} 
\mathcal{L}_e \left( \textbf{M} \right) =  \sum_{t=t_y}^T \sum_{i=1}^{d_Y} \left( \left[  \left( f \circ \Pi_{\textbf{M}} \right)(\textbf{X}) \right]_{t,i} - \left[f(\textbf{X})\right]_{t,i} \right)^2 .
\end{align*} 
Similarly, in a classification task, we minimize the cross-entropy between the predictions:
\begin{align*} 
\mathcal{L}_e \left( \textbf{M} \right) = - \sum_{t=t_y}^T \sum_{c=1}^{d_Y} \left[ f(\textbf{X}) \right]_{t,c} \log \left[ \left( f \circ \Pi_{\textbf{M}} \right)(\textbf{X}) \right]_{t,c}.
\end{align*}
Now we have to make sure that the mask actually selects salient features and discards the others. By remembering that $m_{t,i}=0$ indicates that the feature $x_{t,i}$ is irrelevant for the black-box prediction, this selection translates into imposing sparsity in the mask matrix $\textbf{M}$. A first approach for this, used in \citep{Fong2017, Ho2019}, is to add a $l^1$ regularisation term on the coefficients of $\textbf{M}$. However, it was noted in \citep{Fong2019} that this produces mask that vary with the regularization coefficient $\lambda$  in a way that renders comparisons between different $\lambda$ difficult. To solve this issue, they introduce a new regularization term to impose sparsity:
\begin{align*}
\mathcal{L}_a \left( \textbf{M} \right) = \Vert \text{vecsort}(\textbf{M}) - \textbf{r}_{a} \Vert^2,
\end{align*}  
where $\Vert.\Vert$ denotes the vector 2-norm, $\text{vecsort}$ is a function that vectorizes $\textbf{M}$ and then sort the elements of the resulting vector in ascending order. The vector $\textbf{r}_a$ contains $(1-a)\cdot d_X \cdot T$ zeros followed by $a \cdot d_X \cdot T$ ones, where $a \in [0,1]$. In short, this regularization term encourages the mask to highlight a fraction $a$ of the inputs. For this reason, $a$ can also be referred to as the area of the mask. A first advantage of this approach is that the hyperparameter $a$ can be modulated by the user to highlight a desired fraction of the input. For instance, one can start with a small value of $a$ and slide it to higher values in order to see features gradually appearing in order of importance. Another advantage of this regularization term is that it encourages the mask to be binary, which makes the mask more legible as we will detail in the next section.

Finally, we might want to avoid quick variations in the saliency over time. This could either be a prior belief or a preference of the user with respect to the saliency map. If this is relevant, we can enforce the salient regions to be connected in time with the following loss that penalizes jumps of the saliency over time:
\begin{align*}
 \mathcal{L}_c (\textbf{M}) =~\sum_{t=1}^{T-1} \sum_{i = 1}^{d_X} \mid m_{t+1,i} - m_{t,i}\mid .
\end{align*}
For a fixed fraction $a$, the mask optimization problem can therefore be written as
\begin{align*}
\textbf{M}_a^* = \argmin_{\textbf{M} \in [0,1]^{T \times d_X}} \mathcal{L}_e \left( \textbf{M} \right) + \lambda_a \cdot \mathcal{L}_a \left( \textbf{M} \right) + \lambda_c \cdot \mathcal{L}_c \left( \textbf{M} \right).
\end{align*}  
Note that, in this optimization problem, the fraction $a$ is fixed. In some contexts, one might want to find the smallest fraction of input features that allows us to reproduce the black-box prediction with a given precision. Finding this minimal fraction  corresponds to the following optimization problem:
\begin{align*}
a^* = \min \left\{ a \in [0,1] \mid \mathcal{L}_e \left( \textbf{M}_a^* \right) < \varepsilon \right\},
\end{align*}  
where $\varepsilon$ is the threshold that sets the acceptable precision. The resulting mask $\textbf{M}_{a^*}^*$ is then called \emph{extremal mask}. The idea of an extremal mask is extremely interesting by itself: it explains the black-box prediction in terms of a minimal number of salient features. This is precisely the parsimony that we were referring to in the introduction.

Finally, it is worth mentioning that we have here presented a scheme where the mask preserves the features that minimize the error. There exists a variant of this scheme where the mask preserves the features that maximize the error. This other scheme, together with the detailed optimization algorithm used in our implementation, can be found in Appendix~\ref{sec:implementation_details}.  

\subsection{Masks and information theory}
\label{subsec-info_entropy}

Once the mask is obtained, it highlights the features that contain crucial information for the black-box to issue a prediction. This motivates a parallel between masks and information theory.
To make this rigorous, we notice that the mask admits a natural interpretation in terms of information content. As aforementioned, a value close to $1$ for $m_{t,i}$ indicates that the input feature $x_{t,i}$ carries information that is important for the black box $f$ to predict an outcome. It is therefore natural to interpret a mask coefficient as a probability that the associated feature is salient for the black box to issue its prediction. It is tempting to use this analogy to build the counterpart of \emph{Shannon information content} in order to measure the quantity of information contained in subsequences extracted from the time series. However, this analogy requires a closer analysis. We recall that the Shannon information content of an outcome decreases when the probability of this outcome increases \citep{Shannon1948, MacKay2003, Cover2005}. In our framework, we would like to adapt this notion so that the information content increases with the mask coefficients (if $m_{t,i}$ gets closer to one, this indicates that $x_{t,i}$ carries more useful information for the black-box to issue its prediction). To solve this discrepancy, we have to use $1-m_{t,i}$ as the pseudo-probability appearing in our adapted notion of Shannon information content. 

\begin{definition}[Mask information]
The mask information associated to a mask $\textbf{M}$ and a subsequence $(x_{t,i})_{(t,i) \in A}$ of the input $\textbf{X}$ with $A \subseteq [1:T] \times [1:d_X] $ is
\begin{align*}
I_{\textbf{M}}(A) = - \sum_{(t,i) \in A} \ln \left( 1 - m_{t,i}  \right).
\end{align*}
\end{definition}  
\begin{remark}
Conceptually, the information content of a subsequence measures the quantity of useful information it contains for the black-box to issue a prediction. It allows us to associate a saliency score to a group of inputs according to the mask.
\end{remark}
\begin{remark}
Note that, in theory, the mask information diverges when a mask coefficient is set to one. In practice, this can be avoided by imposing $ \textbf{M} \in (0,1)^{T \times d_X}$.
\end{remark}

As in traditional information theory, the information content is not entirely informative on its own. For instance, consider two subsequences indexed by, respectively, $A , B$ with $ |A| = |B| = 10$. We assume that the submask extracted from $\textbf{M}$ with $A$ contains 3 coefficients $m = 0.9$ and 7 coefficients $m = 0$ so that the information content of $A$ is $I_{\textbf{M}}(A) \approx 6.9$. Now we consider that all the coefficient extracted from $\textbf{M}$ with $B$ are equal to $0.5$ so that $I_{\textbf{M}}(B) \approx 6.9$ and hence $I_{\textbf{M}}(A) \approx I_{\textbf{M}}(B)$. In this example, $A$ clearly identifies 3 important features while $B$ gives a mixed score for the $10$ features. Intuitively, it is pretty clear that the information provided by $A$ is sharper. Unfortunately, the mask information by itself does not allow to distinguish these two subsequences.  Hopefully, a natural distinction is given by the counterpart of Shannon entropy. 

\begin{definition}[Mask entropy]
The mask entropy associated to a mask $\textbf{M}$ and a subsequence $(x_{t,i})_{(t,i) \in A}$ of the input $\textbf{X}$ with $A \subseteq [1:T] \times [1:d_X] $ is
\begin{align*}
S_{\textbf{M}}(A) = - \sum_{(t,i) \in A} m_{t,i} \ln m_{t,i}  + \left( 1 - m_{t,i} \right) \ln \left( 1 - m_{t,i}  \right)
\end{align*}
\end{definition}
\begin{remark}
We stress that the mask entropy is \emph{not} the Shannon entropy for the subsequence $(x_{t,i})_{(t,i) \in A}$. Indeed, the pseudo-probabilities that we consider are not the probabilities $p(x_{t,i})$ for each feature $x_{t,i}$ to occur. In particular, there is no reason to expect that the probability of each input decouple from the others so that it would be wrong to sum the individual contribution of each input separately~\footnote{ It is nonetheless possible to build a single mask coefficient for a group of inputs in order to imitate a joint distribution.} like we are doing here.
\end{remark}

Clearly, it is desirable for our mask to provide explanations with low entropy. This stems from the fact that the entropy is maximized when mask coefficients $m_{t,i}$ are close to $0.5$. In this case, given our probabilistic interpretation, the mask coefficient is ambiguous as it does not really indicate whether the feature is salient. This is consistent with our previous example where $S_{\textbf{M}}(A) \approx 0.98$ while $S_{\textbf{M}}(B) \approx 6.93$ so that $S_{\textbf{M}}(A) \ll S_{\textbf{M}}(B)$. Since masks coefficients take various values in practice, masks with higher entropy appear less legible, as illustrated in Figure~\ref{illustration-mask_information}. Therefore, we use the entropy as a measure of the mask's sharpness and legibility in our experiments. In particular, the entropy is minimized for perfectly binary masks $M \in \{ 0,1\}^{T \times d_X}$, which are easy to read and contain no ambiguity. Consequently, the regularization term $\mathcal{L}_a (\textbf{M})$ that we used in Section~\ref{subsec-optimization} has the effect of reducing the entropy. Since our adapted notions of information and entropy rely on individual contributions from each feature, they come with natural properties.

\begin{figure}[ht]
\vskip 0.2in
\begin{center}
\centerline{\includegraphics[width=0.75\columnwidth]{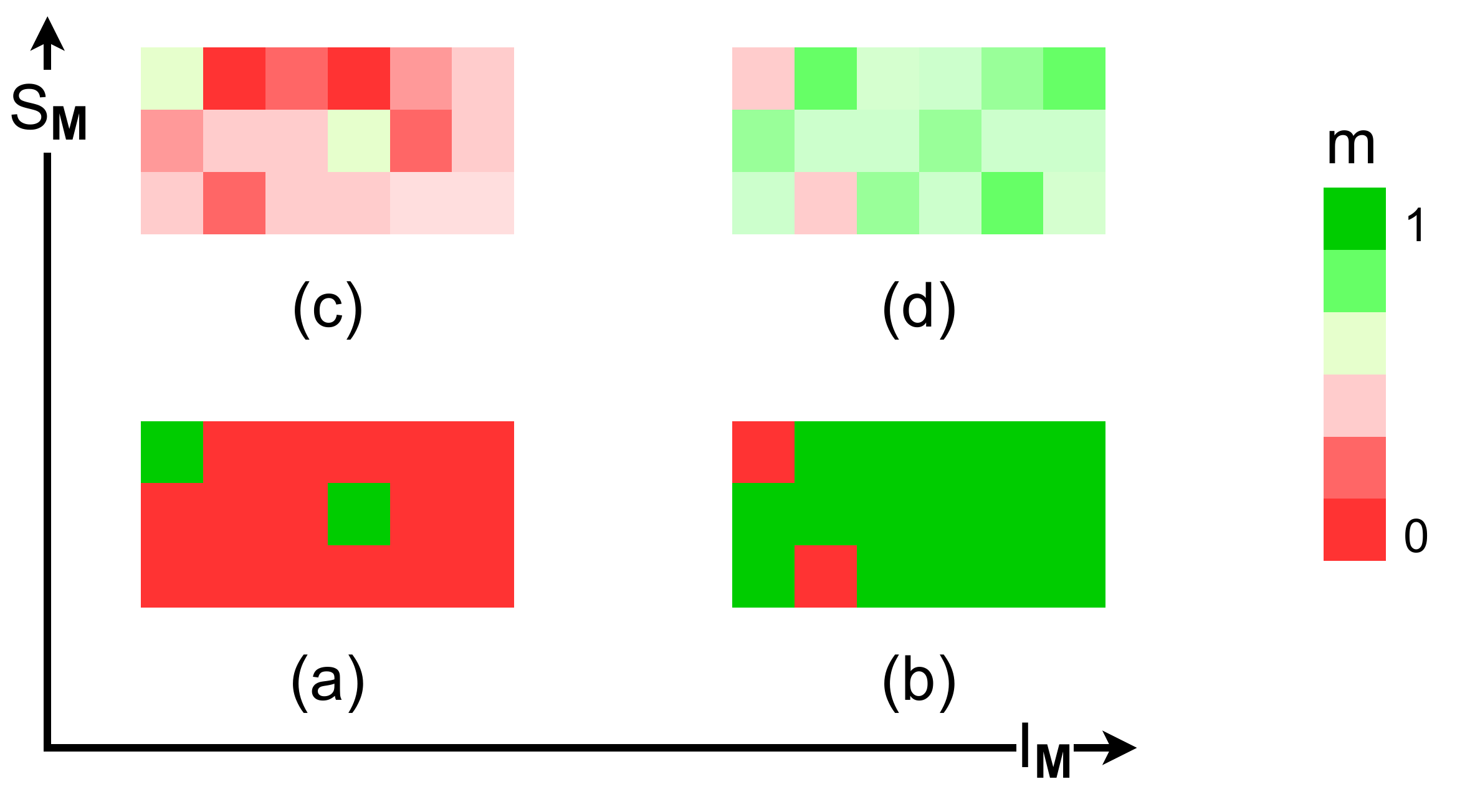}}
\caption{Mask information and entropy. For a given subsequence and a given mask, the information increases when more features are relevant ($a \rightarrow b$ or $c \rightarrow d$). The entropy increases when the sharpness of the saliency map decreases ($a \rightarrow c$ or $b \rightarrow d$). Masks with high entropy appear less legible, especially for long sequences.}
\label{illustration-mask_information}
\end{center}
\vskip -0.2in
\rule[1ex]{\columnwidth}{0.5pt}
\end{figure}

\begin{proposition}[Metric properties]
For all labelling sets $A , B \subseteq [1:T] \times [1:d_X] $, the mask information and entropy enjoy the following properties:

\textbf{1. Positivity}
\begin{align*}
I_{\textbf{M}}(A) \geq 0 \hspace{1cm} S_{\textbf{M}}(A) \geq 0
\end{align*}
\textbf{2. Additivity}
\begin{align*}
& I_{\textbf{M}}(A \cup B) = I_{\textbf{M}}(A) + I_{\textbf{M}}(B) - I_{\textbf{M}}(A \cap B) \\
& S_{\textbf{M}}(A \cup B) = S_{\textbf{M}}(A) + S_{\textbf{M}}(B) - S_{\textbf{M}}(A \cap B)
\end{align*}
\textbf{3. Monotonicity} If $A \subset B$ :
\begin{align*}
I_{\textbf{M}}(A) \leq I_{\textbf{M}}(B) \hspace{1cm} S_{\textbf{M}}(A) \leq S_{\textbf{M}}(B).
\end{align*}
\end{proposition}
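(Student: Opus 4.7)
The plan is to observe that both $I_{\textbf{M}}(A)$ and $S_{\textbf{M}}(A)$ are simple sums of per-coordinate contributions indexed by the labelling set $A$. Writing $\iota(m) = -\ln(1-m)$ and $\sigma(m) = -m\ln m - (1-m)\ln(1-m)$, we have $I_{\textbf{M}}(A) = \sum_{(t,i)\in A} \iota(m_{t,i})$ and $S_{\textbf{M}}(A) = \sum_{(t,i)\in A} \sigma(m_{t,i})$. With this decoupling, all three claimed properties reduce to elementary facts about sums of non-negative real numbers over finite sets, which suggests handling positivity first and then reading off the other two properties from it.

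For positivity, I would verify pointwise non-negativity of the two summands. Since $m_{t,i} \in [0,1)$ implies $1 - m_{t,i} \in (0,1]$, we get $\ln(1 - m_{t,i}) \leq 0$, hence $\iota(m_{t,i}) \geq 0$. For $\sigma$, this is exactly the binary Shannon entropy function: both $m\ln m$ and $(1-m)\ln(1-m)$ are non-positive on $[0,1]$ (with the convention $0\ln 0 = 0$), so $\sigma(m) \geq 0$. Summing non-negative quantities yields $I_{\textbf{M}}(A),\, S_{\textbf{M}}(A) \geq 0$.

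For additivity, I would invoke the purely set-theoretic identity $\sum_{x\in A\cup B} \varphi(x) = \sum_{x\in A} \varphi(x) + \sum_{x\in B} \varphi(x) - \sum_{x\in A\cap B} \varphi(x)$, which holds for any real-valued $\varphi$ on the underlying index set; applying it to $\varphi(t,i) = \iota(m_{t,i})$ and to $\varphi(t,i) = \sigma(m_{t,i})$ yields the two displayed inclusion--exclusion identities. For monotonicity, when $A \subset B$ I would decompose $B$ as the disjoint union $A \cup (B\setminus A)$; additivity then gives $I_{\textbf{M}}(B) = I_{\textbf{M}}(A) + I_{\textbf{M}}(B\setminus A)$, and positivity of the second summand closes the argument. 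The analogous computation works verbatim for $S_{\textbf{M}}$.

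I do not anticipate a genuine obstacle: the proposition essentially certifies that our pseudo-information behaves like a finitely additive non-negative set function, and the proof is mechanical once positivity of the summands is established. The only minor subtlety is the boundary point $m_{t,i} = 1$, at which $\iota$ diverges; following the remark accompanying the definition, I would restrict attention to masks with coefficients in $(0,1)$ so that every quantity appearing is finite and all manipulations are fully justified.
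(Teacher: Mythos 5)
Your proposal is correct and follows essentially the same route as the paper: pointwise non-negativity of the summands for positivity, inclusion--exclusion over finite index sets for additivity, and the disjoint decomposition $B = A \sqcup (B\setminus A)$ combined with positivity for monotonicity. The only cosmetic difference is that you invoke the inclusion--exclusion identity for sums directly, whereas the paper derives it by first treating disjoint sets and then the difference $I_{\textbf{M}}(D\setminus C) = I_{\textbf{M}}(D) - I_{\textbf{M}}(C)$; both are equivalent.
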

\begin{proof}
See Appendix~\ref{sec:math_details}. 
\end{proof}

Together, these properties guarantee that $I_{\textbf{M}}$ and $S_{\textbf{M}}$ define measures for the discrete $\sigma$-algebra $\mathcal{P} \left([1:T]\times[1:d_X] \right)$ on the set of input indexes $[1:T]\times[1:d_X]$. All of these metrics can be computed for any saliency method, hence allowing comparisons in our experiments. For more details, please refer to Appendix~\ref{sec:math_details}.

\section{Experiments}
\label{sec-experiments}
In this section\footnote{Our implementation can be found at \url{https://github.com/JonathanCrabbe/Dynamask}.}, we evaluate the quality of our dynamic masks. There are two big difficulties to keep in mind when evaluating the quality of a saliency method. The first one is that the true importance of features is usually unknown with real-world data. The second one is that the the performance of a saliency method in identifying relevant features depends on the black-box and its performances. In order to illustrate these challenges, we propose three different experiments in ascending order of difficulty. In the first experiment, a white box with known feature importance is used so that both difficulties are avoided. In the second experiment, a black-box trained on a dataset with known feature importance is used and hence only the second difficulty is encountered. In the third experiment, a black-box trained on a real-world clinical dataset is used and hence both difficulties are encountered. For each experiment, more details are given in Appendix~\ref{sec:experiment_details}.

\subsection{Feature importance for a white-box regressor} 
\textbf{Experiment} In this experiment, we work with a trivial white-box regressor whose predictions only rely on a known subset $A = A_T \times A_X \subset [1:T] \times [1:d_X]$ of salient inputs, where $A_T$ and $A_X$ respectively give the salient times and features:
\begin{align*}
\left[f \left( \textbf{X} \right)\right]_{t} =  \left\{
	\begin{array}{cl}
	\sum\limits_{i \in A_X} \left(x_{t,i}\right)^2 & \mbox{if } t \in A_T \\
	0 & \mbox{else.}
	\end{array}
	\right.
\end{align*} 
Note that, in this experiment, $d_Y = 1$ so that we can omit the second index for $f \left( \textbf{X} \right)$. In our experiment, we consider two scenarios that are known to be challenging for state of the art saliency methods~\citep{Ismail2020}. In the first one, the white-box depends on a small portion of salient features $\vert A_X\vert \ll d_X$. In the second one, the white-box depends on a small portion of salient times $\vert A_T\vert \ll T$. We fit a mask to this black-box by using the squared error loss together with the regularization. In our experiment: $T = d_X = 50$. The salient time and features are selected randomly, the input features are generated with an ARMA process. We repeat the experiment 10 times.  

\textbf{Metrics} Since the true salient features are known unambiguously in this setup, we have a large set of metrics at hand to evaluate the performance of a saliency method. To measure the proportion of identified features that are indeed salient, we use the \emph{area under the precision curve} (AUP, higher is better). To measure the portion of salient features that have indeed been identified, we use the \emph{area under the recall curve} (AUR, higher is better). To measure how much information the saliency method predicts for the salient region, we use the mask information content~($I_{\textbf{M}}(A)$, higher is better). To measure the sharpness of the explanation in this region, we use the mask entropy~($S_{\textbf{M}}(A)$, lower is better).

\textbf{Benchmarks} The fact that we are dealing with a white-box regressor already disqualifies some methods such as FIT~\citep{Tonekaboni2020} or DeepLIFT. We compare our method with Feature Occlusion (FO), Feature Permutation (FP), Integrated Gradient (IG) and Shapley Value Sampling (SVS)~\citep{Castro2009}. For fair comparison, we use these baselines to evaluate the importance of each feature at each given time. As explained in Section~\ref{subsec-info_entropy}, a mask can be associated to each saliency method.

\begin{table}[ht]
\caption{Scores for the rare feature experiment.}
\label{table-rare_feature}
\vskip 0.1in
\begin{center}
\resizebox{\columnwidth}{!}{
\begin{tabular}{ c  c c c c }
\toprule
 &  AUP & AUR & $I_{\textbf{M}}(A)$ & $S_{\textbf{M}}(A)$ \\
\hline
\textbf{MASK} & $0.99 \pm 0.01$ & $\textbf{0.58} \pm \textbf{0.03}$ & $\textbf{252} \pm \textbf{69}$ & $\textbf{0.7} \pm \textbf{0.7}$  \\
FO & $\textbf{1.00} \pm \textbf{0.00}$ & $0.14 \pm 0.03$ & $9 \pm 6$ & $11.0 \pm 2.5$  \\
FP & $\textbf{1.00} \pm \textbf{0.00}$ & $0.16 \pm 0.04$ & $13 \pm 7$ & $12.6 \pm 3.3$  \\
IG & $0.99 \pm 0.00$ & $0.14 \pm 0.03$ & $8 \pm 4$ & $11.1 \pm 2.5$  \\
SVS & $\textbf{1.00} \pm \textbf{0.00}$ & $0.14 \pm 0.04$ & $9 \pm 6$ & $11.0 \pm 2.5$  \\
\bottomrule
\end{tabular}}
\end{center}
\vskip -0.1in
\end{table}

\begin{table}[ht]
\caption{Scores for the rare time experiment.}
\label{table-rare_time}
\vskip 0.1in
\begin{center}
\resizebox{\columnwidth}{!}{
\begin{tabular}{ c  c c c c }
\toprule
 & AUP & AUR & $I_{\textbf{M}}(A)$ & $S_{\textbf{M}}(A)$ \\
\hline
\textbf{MASK} & $0.99 \pm 0.01$ & $\textbf{0.68} \pm \textbf{0.04}$ & $\textbf{1290} \pm \textbf{106}$ & $\textbf{7.1} \pm \textbf{2.5}$  \\
FO & $\textbf{1.00} \pm \textbf{0.00}$ & $0.14 \pm 0.04$ & $49 \pm 14$ & $48.3 \pm 6.5$  \\
FP & $\textbf{1.00} \pm \textbf{0.00}$ & $0.16 \pm 0.03$ & $53 \pm 8$ & $54.7 \pm 5.8$  \\
IG & $0.99 \pm 0.00$ & $0.14 \pm 0.04$ & $38 \pm 12$ & $48.7 \pm 6.7$  \\
SVS & $\textbf{1.00} \pm \textbf{0.00}$ & $0.14 \pm 0.04$ & $49 \pm 14$ & $48.3 \pm 6.5$  \\
\bottomrule
\end{tabular}}
\end{center}
\vskip -0.1in
\end{table}

\textbf{Discussion} The AUP is not useful to discriminate between the methods. Our method significantly outperforms all the other benchmarks for all other metric. In particular, we notice that, for both experiments, our method identifies a significantly higher portion of features that are truly salient (higher AUR). As claimed in Section~\ref{subsec-optimization}, we observe that our optimization technique is indeed efficient to produce explanations with low mask entropy. 

\subsection{Feature importance for a black-box classifier }
\textbf{Experiment} We reproduce the state experiment from~\citep{Tonekaboni2020} but in a more challenging setting. In this case, the data is generated according to a 2-state hidden Markov model~(HMM) whose state at time $t \in [1:T]$ (T = 200) is denoted $s_t \in \{ 0,1 \}$. At each time, the input feature vector has three components ($d_X = 3$) and is generated according to the current state via $\textbf{x}_t \sim \mathcal{N}\left( \boldsymbol{\mu}_{s_t}, \boldsymbol{\Sigma}_{s_t} \right)$. To each of these input vectors is associated a binary label $y_t \in \{ 0,1 \}$. This binary label is conditioned by one of the three component of the feature vector, based on the state:
\begin{align*}
p_t = \left\{ \begin{array}{cl}
	\left( 1 + \exp \left[-x_{2,t} \right] \right)^{-1} & \mbox{if } s_t = 0 \\
	\left( 1 + \exp \left[-x_{3,t} \right] \right)^{-1} & \mbox{if } s_t = 1 \\
	\end{array}
	\right. .	
\end{align*}
The label is then emitted via a Bernoulli distribution $y_t~\sim~\mbox{Ber}(p_t)$.
The experiment proposed in~\citep{Tonekaboni2020} focuses on identifying the salient feature at each time where a state transition occurs. However, it is clear from the above discussion that there is exactly one salient feature at any given time. More precisely, the set of salient indexes is $A~=~\left\{~(t, 2+s_t)~\mid~t~\in~[1:T]~ \right\}$. We generate 1000 such time series, 800 of them are used to train a  RNN black-box classifier $f$ with one hidden layer  made of 200 hidden GRU cells\footnote{Implementation details of the original experiment can be found at \url{https://github.com/sanatonek/time_series_explainability}}. We then fit an extremal mask to the black-box by minimizing the cross entropy error for each test time series. We repeat the experiment 5 times. 

\textbf{Metrics} We use the same metrics as before.

\textbf{Benchmarks} In addition to the previous benchmarks, we use Augmented Feature Occlusion (AFO), FIT, RETAIN (RT), DeepLIFT (DL), LIME and GradSHAP (GS).

\begin{table}[ht]
\caption{Scores for the state experiment.}
\label{table-state}
\vskip 0.1in
\begin{center}
\resizebox{\columnwidth}{!}{
\begin{tabular}{ c c c c c }
\toprule
 & AUP & AUR & $I_{\textbf{M}}(A) \div 10^5$ & $S_{\textbf{M}}(A) \div 10^4$ \\
\hline
\textbf{MASK} & $\textbf{0.88} \pm \textbf{0.01}$ & $0.70 \pm 0.00$ & $\textbf{2.24} \pm \textbf{0.01}$ & $\textbf{0.04} \pm \textbf{0.00}$  \\
FO & $0.63 \pm 0.01$ & $0.45 \pm 0.01$ & $0.21 \pm 0.00$ & $1.79 \pm 0.00$  \\
AFO & $0.63 \pm 0.01$ & $0.42 \pm 0.01$ & $0.19 \pm 0.00$ & $1.76 \pm 0.00$  \\
IG & $0.56 \pm 0.00$ & $\textbf{0.78} \pm \textbf{0.00}$ & $0.05 \pm 0.00$ & $1.39 \pm 0.00$  \\
GS & $0.49 \pm 0.00$ & $0.62 \pm 0.00$ & $0.33 \pm 0.00$ & $1.73 \pm 0.00$  \\
LIME & $0.49 \pm 0.01$ & $0.50 \pm 0.01$ & $0.04 \pm 0.00$ & $1.11 \pm 0.00$  \\
DL & $0.57 \pm 0.01$ & $0.20 \pm 0.00$ & $0.09 \pm 0.00$ & $1.18 \pm 0.00$  \\
RT & $0.42 \pm 0.03$ & $0.51 \pm 0.01$ & $0.03 \pm 0.00$ & $1.75 \pm 0.00$  \\
FIT & $0.44 \pm 0.01$ & $0.60 \pm 0.02$ & $0.47 \pm 0.02$ & $1.57 \pm 0.00$  \\
\bottomrule
\end{tabular}}
\end{center}
\vskip -0.1in
\end{table}

\textbf{Discussion} Our method outperforms the other benchmarks for 3 of the 4 metrics.
The AUR suggests that IG identifies more salient features. However, the significantly lower AUP for IG suggests that it identifies too many features as salient. We found that IG identifies $87 \%$ of the inputs as salient\footnote{In this case, we consider a feature as salient if its normalized importance score (or mask coefficient) is above $0.5$.} versus $32 \%$ for our method. From the above discussion, it is clear that only $1/3$ of the inputs are really salient, hence Dynamask offers more parsimony. We use two additional metrics to offer further comparisons between the methods: the area under the receiver operating characteristic (AUROC) and the area under the precision-recall curve (AUPRC). We reach the conclusion that Dynamask outperforms the other saliency methods with $\text{AUROC} = 0.93 \pm 0.00 $ and $\text{AUPRC} = 0.85 \pm 0.00$. The second best method is again Integrated Gradient with $\text{AUROC} = 0.91 \pm 0.00 $ and $\text{AUPRC} = 0.79 \pm 0.00$. All the other methods have $\text{AUROC} \leq 0.85$ and $\text{AUPRC} \leq 0.70$.

\subsection{Feature importance on clinical data}

\textbf{Experiment} We reproduce the MIMIC mortality experiment from \citep{Tonekaboni2020}. We fit a RNN black-box with 200 hidden GRU cells to predict the mortality of a patient based on 48 hours ($T=48$) of patient features ($d_X = 31$). This is a binary classification problem for which the RNN estimates the probability of each class. For each patient, we fit a mask with area $a$ to identify the most important observations $x_{t,i}$. Since the ground-truth feature saliency is unknown, we must find an alternative way to assess the quality of this selection. If these observations are indeed salient, we expect them to have a big impact on the black-box's prediction if they are replaced. For this reason, we replace the most important observations by the time average value for each associated feature $x_{t,i} \mapsto \tilde{x}_{t,i} = \frac{1}{T} \sum_{t=1}^T x_{t,i}$. We then compare the prediction for the original input $f \left(\textbf{X} \right)$ with the prediction for the input where the most important observations have been replaced $f \left( \tilde{\textbf{X}} \right)$. A big shift in the prediction indicates that these observations are salient. We repeat this experiment 3 times for various values of $a$.
 
\textbf{Dataset} We use the MIMIC-III dataset~\citep{Johnson2016}, that contains the health record of  40, 000 ICU de-identified patients at the Beth Israel Deaconess Medical Center. The selected data and its preprocessing is the same as the one done by \citep{Tonekaboni2020}.

\textbf{Metrics} To estimate the importance of the shift induced by replacing the most important observations, we use the \emph{cross-entropy} between $f \left(\textbf{X} \right)$ and $f \left( \tilde{\textbf{X}} \right)$ (CE, higher is better). To evaluate the number of patient whose prediction has been flipped, we compute the \emph{accuracy} of $f \left( \tilde{\textbf{X}} \right)$ with respect to the initial predictions $f \left(\textbf{X} \right)$ (ACC, lower is better).

\textbf{Benchmarks} We use the same benchmarks as before. 

\begin{figure}[ht]
\vskip 0.2in
\begin{center}
\centerline{\includegraphics[width=\columnwidth]{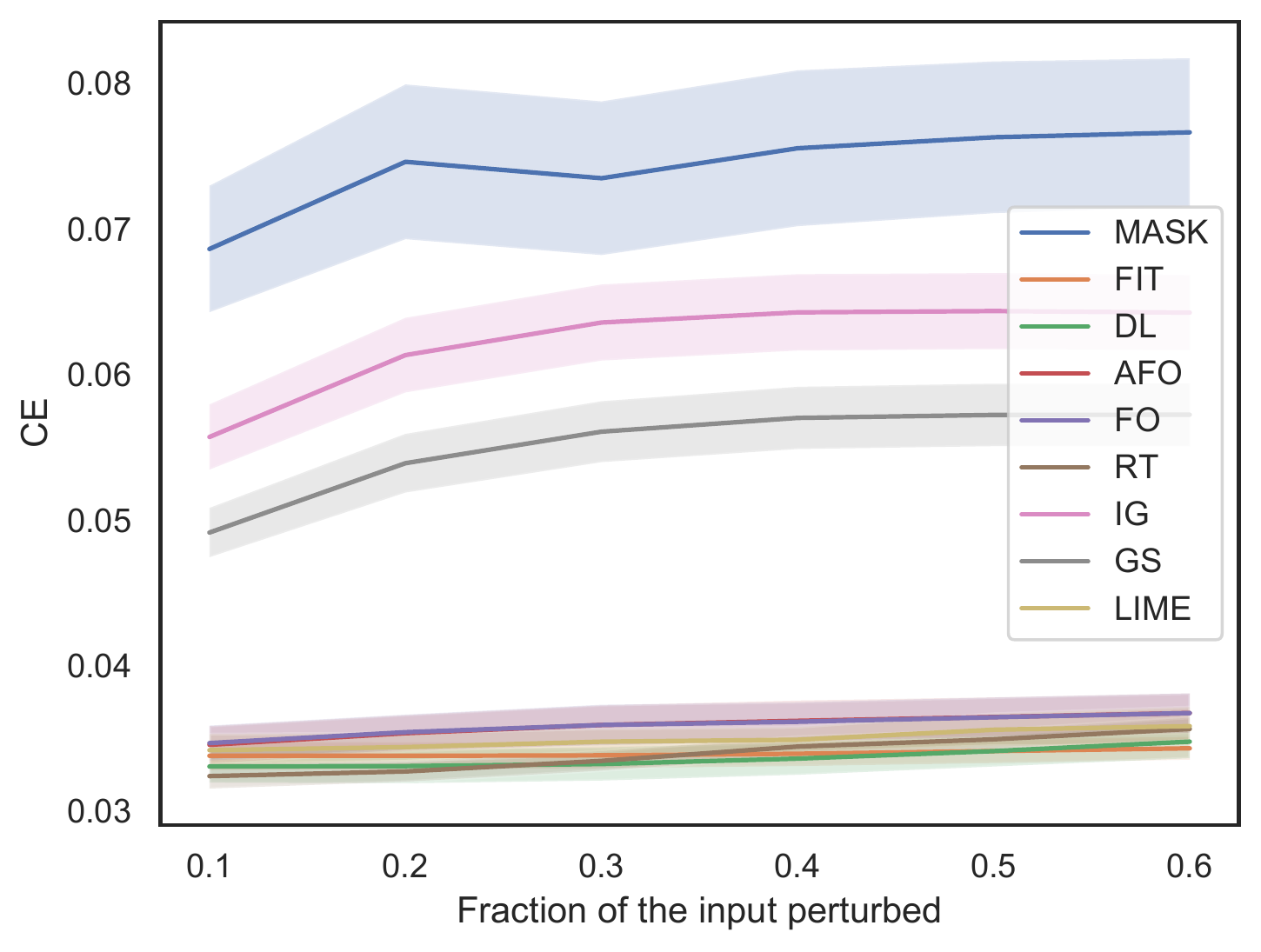}}
\caption{CE as a function of $a$ for the MIMIC experiment.}
\label{fig:mimic_ce}
\end{center}
\vskip -0.2in
\rule[1ex]{\columnwidth}{0.5pt}
\end{figure} 

\begin{figure}[ht]
\vskip 0.2in
\begin{center}
\centerline{\includegraphics[width=\columnwidth]{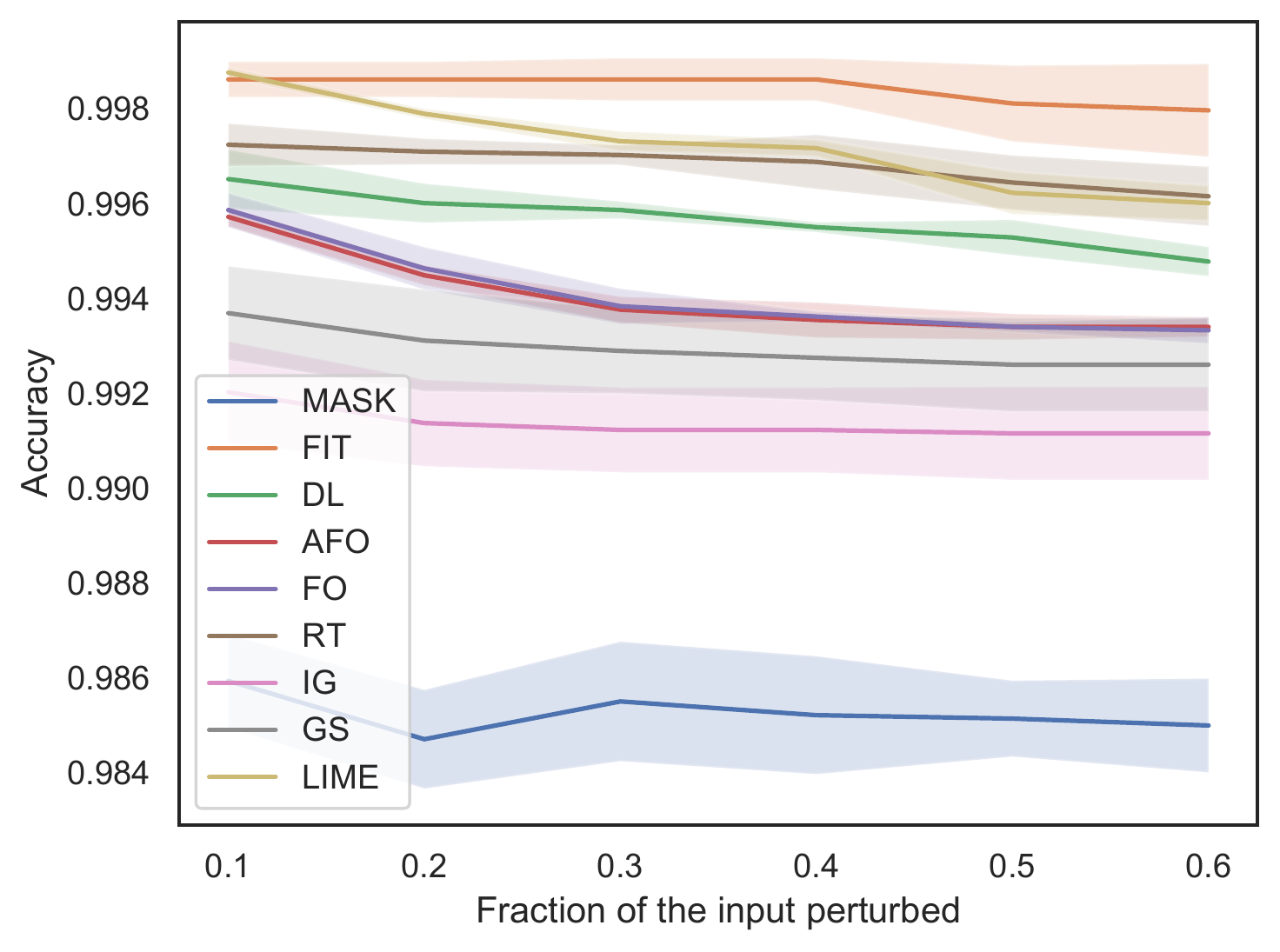}}
\caption{ACC as a function of $a$ for the MIMIC experiment.}
\label{fig:mimic_acc}
\end{center}
\vskip -0.2in
\rule[1ex]{\columnwidth}{0.5pt}
\end{figure}

\textbf{Discussion} The results are shown in Figure~\ref{fig:mimic_ce} \& \ref{fig:mimic_acc}. The observations selected by Dynamask have the most significant impact when replaced. The high accuracy suggests that the perturbation we use (replacing the most important observations by their time average) rarely produces a counterfactual input.  

\section{Conclusion}
\label{sec-conclusion}
In this paper, we introduced Dynamask, a saliency method specifically designed for multivariate time series. These masks are endowed with an insightful information theoretic interpretation and offer a neat improvement in terms of performance. Dynamask has immediate applications in medicine and finance, where black-box predictions require more transparency.

For future works, it would be interesting to design more sophisticated consistency tests for saliency methods in a dynamic setting, like the ones that exist in image classification~\citep{Adebayo2018}. This could be used to study the advantages or disadvantages of our method in more details. Another interesting avenue would be to investigate what the dynamic setting can offer to provide richer explanations with some treatment of causality~\citep{Moraffah2020}.

\section*{Acknowledgments}
The authors are grateful to Ioana Bica, James Jordon, Yao Zhang and the 4 anonymous ICML reviewers for their useful comments on an earlier version of the manuscript. Jonathan Crabbé would like to acknowledge Bilyana Tomova for many insightful discussions and her constant support. Jonathan Crabbé is funded by Aviva. Mihaela van der Schaar is supported by the Office of Naval Research (ONR), NSF 1722516.

\clearpage
\nocite{*}
\bibliography{dynamask}
\bibliographystyle{icml2021}

\clearpage
\appendix
\section{More details on the mathematical formulation} \label{sec:math_details}
\subsection{Proofs}
In this subsection section, we prove the propositions form the main paper.
\begin{proposition}[Metric properties]
For all labelling sets $A , B \subset [1:T] \times [1:d_X] $, the mask information and entropy enjoy the following properties:

\textbf{Positivity}:
\begin{align*}
I_{\textbf{M}}(A) \geq 0 \hspace{1cm} S_{\textbf{M}}(A) \geq 0
\end{align*}
\textbf{Additivity}: 
\begin{align*}
&I_{\textbf{M}}(A \cup B) = I_{\textbf{M}}(A) + I_{\textbf{M}}(B) - I_{\textbf{M}}(A \cap B) \\
&S_{\textbf{M}}(A \cup B) = S_{\textbf{M}}(A) + S_{\textbf{M}}(B) - S_{\textbf{M}}(A \cap B)
\end{align*}
\textbf{Monotonicity} If $A \subset B$ :
\begin{align*}
I_{\textbf{M}}(A) \leq I_{\textbf{M}}(B) \hspace{1cm} S_{\textbf{M}}(A) \leq S_{\textbf{M}}(B).
\end{align*}
\end{proposition}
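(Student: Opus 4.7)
The plan is to observe that both $I_{\textbf{M}}$ and $S_{\textbf{M}}$ have the structural form $\mu(A) = \sum_{(t,i) \in A} \varphi(m_{t,i})$ for a nonnegative pointwise function $\varphi \colon [0,1] \to \mathbb{R}_{\geq 0}$, namely $\varphi_I(m) = -\ln(1-m)$ for the information and $\varphi_S(m) = -m\ln m - (1-m)\ln(1-m)$ for the entropy. Once this common form is isolated, all three properties reduce to elementary facts about finite sums of nonnegative reals indexed by subsets of a finite set.

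First I would establish positivity. For $\varphi_I$, since $m_{t,i} \in [0,1]$ implies $1-m_{t,i} \in [0,1]$, we have $\ln(1-m_{t,i}) \leq 0$ and hence $\varphi_I(m_{t,i}) \geq 0$. For $\varphi_S$, I would recall that the binary entropy $h(p) = -p\ln p - (1-p)\ln(1-p)$ is nonnegative on $[0,1]$ (each summand is of the form $-q\ln q$ with $q \in [0,1]$, which is nonnegative, using the convention $0 \ln 0 = 0$). Summing nonnegative terms over $A$ then gives $I_{\textbf{M}}(A), S_{\textbf{M}}(A) \geq 0$.

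Next I would prove additivity by the standard inclusion–exclusion identity for finite sums. For any nonnegative function $\varphi$ on $[0,1]$ and any finite subsets $A, B \subseteq [1:T] \times [1:d_X]$, the decomposition $\mathbf{1}_{A \cup B} = \mathbf{1}_A + \mathbf{1}_B - \mathbf{1}_{A \cap B}$ gives
\begin{equation*}
\sum_{(t,i) \in A \cup B} \varphi(m_{t,i}) = \sum_{(t,i) \in A} \varphi(m_{t,i}) + \sum_{(t,i) \in B} \varphi(m_{t,i}) - \sum_{(t,i) \in A \cap B} \varphi(m_{t,i}),
\end{equation*}
which is exactly the claimed identity applied to $\varphi_I$ and $\varphi_S$ respectively.

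Finally, monotonicity follows directly from positivity and the set-theoretic decomposition $B = A \sqcup (B \setminus A)$ whenever $A \subset B$: we have $\mu(B) = \mu(A) + \mu(B \setminus A) \geq \mu(A)$ since $\mu(B \setminus A) \geq 0$ by the same nonnegativity of $\varphi$. None of these steps presents a real obstacle; the only mild subtlety is handling the boundary values $m_{t,i} \in \{0,1\}$ for the entropy, which is dispatched by the standard convention $0 \ln 0 := 0$ (and for the information is excluded by the remark in the statement, $\textbf{M} \in (0,1)^{T \times d_X}$). I would therefore present the argument once for a generic nonnegative $\varphi$ and instantiate it for $\varphi_I$ and $\varphi_S$, which keeps the proof compact.
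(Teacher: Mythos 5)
Your proposal is correct and follows essentially the same route as the paper's proof: positivity from the pointwise nonnegativity of each summand, and additivity and monotonicity from the fact that both quantities are finite sums over index sets. The only cosmetic difference is that you obtain additivity in one step via the indicator identity $\mathbf{1}_{A \cup B} = \mathbf{1}_A + \mathbf{1}_B - \mathbf{1}_{A \cap B}$, whereas the paper first treats disjoint unions and then decomposes $A \cup B = A \sqcup \left[ B \setminus (A \cap B) \right]$; your unified treatment via a generic nonnegative $\varphi$ is a tidy packaging of the same argument.
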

\begin{proof} Let us proof all the properties one by one.

\textbf{Positivity} By definition, all coefficients from the mask are normalized: $m_{t,i}\in [0,1]$ for $(t,i) \in [1:T] \times [1:d_X]$. Positivity follows trivially from the properties of the logarithm function
\begin{align*}
I_{\textbf{M}}(A) & = - \sum_{(t,i)\in A}  \underbrace{\ln \left(1 - m_{t,i} \right)}_{\leq 0} \\
 & \geq 0 .
\end{align*}
The same goes for the entropy
\begin{align*}
S_{\textbf{M}}(A) & = - \sum_{(t,i)\in A} \underbrace{m_{t,i}}_{\geq 0} \underbrace{\ln m_{t,i}}_{\leq 0} + \underbrace{\left( 1 - m_{t,i} \right)}_{\geq 0} \underbrace{\ln \left(1 - m_{t,i} \right)}_{\leq 0} \\
 & \geq 0 .
\end{align*}

\textbf{Additivity} We first note that the proposition follows trivially if the sets are distinct $A\cap B = \varnothing$:
\begin{align*}
I_{\textbf{M}} \left( A \cup B \right) & =  &- \sum_{(t,i)\in A \cup B} \ln \left(1 - m_{t,i} \right)& \\
& = &- \sum_{(t,i)\in A} \ln \left(1 - m_{t,i} \right) & \\
&   &- \sum_{(t,i)\in B} \ln \left(1 - m_{t,i} \right) & \\
& = &I_{\textbf{M}}(A) + I_{\textbf{M}}(B). & \hspace{1cm}
\end{align*}
Now consider the case where $C \subset D$, since $C$ and $D\setminus C$ are disjoint, we can write
\begin{align}
& I_{\textbf{M}}\left( D \right) = I_{\textbf{M}}\left( C \right) + I_{\textbf{M}}\left( D \setminus C \right) \\
\Rightarrow \ & I_{\textbf{M}}\left( D \setminus C \right) = I_{\textbf{M}}\left( D \right) - I_{\textbf{M}}\left( C \right) . \label{equ-info_subset}
\end{align}

We shall now prove the additivity property in general by using these two ingredients. First we note that the set $A \cup B$ can be written as the disjoint union $A \sqcup \left[ B\setminus(A \cap B) \right]$. It follows that
\begin{align*}
I_{\textbf{M}}(A \cup B) & =  I_{\textbf{M}}\left( A \right) + I_{\textbf{M}}\left( B\setminus \left[A \cap B\right] \right) \\
& =  I_{\textbf{M}}\left( A \right) +  I_{\textbf{M}}\left( B \right) -  I_{\textbf{M}}\left( A \cap B \right),
\end{align*}
where we have used the additivity property for disjoint sets in the first equality and the fact that $\left( A \cap B \right) \subset B$ in the second equality. The same reasoning holds for the entropy.

\textbf{Monotonicity} To prove the monotonicity property, it is useful to note that if $A \subset B$, we can use \eqref{equ-info_subset} to write
\begin{align*}
I_{\textbf{M}}\left( A \right) &= I_{\textbf{M}}\left( B \right) - \underbrace{I_{\textbf{M}}\left( B \setminus A \right)}_{\geq 0} \\
& \leq I_{\textbf{M}}\left( B \right),
\end{align*}
where we have used the information positivity to produce the inequality. The same reasoning holds for the entropy.
\end{proof}

\subsection{Normalized information and entropy}
In this subsection, we introduce the normalized counterparts of our information theoretic metrics. It is important to keep in mind that all the available information for issuing a black-box prediction is in $ [1:T] \times [1:d_X] $. Therefore, the monotonicity property allows to introduce normalized counterparts of the information and the entropy.
\begin{definition}[Normalized metrics]
The normalized mask information associated to a mask $\textbf{M}$ and a subsequence $(x_{t,i})_{(t,i) \in A}$ of the input $\textbf{X}$ with $A \subseteq [1:T] \times [1:d_X] $ is
\begin{align*}
i_{\textbf{M}}(A) = \frac{I_{\textbf{M}}(A)}{I_{\textbf{M}}([1:T] \times [1:d_X])}.
\end{align*}
The same goes for the related normalized mask entropy
\begin{align*}
s_{\textbf{M}}(A) = \frac{S_{\textbf{M}}(A)}{S_{\textbf{M}}([1:T] \times [1:d_X])}.
\end{align*}
\end{definition}
\begin{remark}
By the monotonicity and the positivity properties, it is clear that $ 0 \leq i_{\textbf{M}}(A) , s_{\textbf{M}}(A) \leq 1$ for all $ A \subseteq [1:T] \times [1:d_X] $. This gives a natural interpretation of these quantities as being, respectively, the fraction of the total information and entropy contained in the subsequence $A$ according to the mask $\textbf{M}$.
\end{remark}

The normalized version of the metrics allow to measure what percentage of the total mask information/entropy is contained in a given subsequence.

\subsection{Definition of a mask for other saliency methods}
In this section, we explain how to associate a mask to any saliency method. Suppose that a given method produces a score matrix $\textbf{R} \in \mathbb{R}^{T \times d_{X}}$ that assigns an importance score $r_{t,i}$ for each element $x_{t,i}$ of the input matrix $\textbf{X}$. Then, if we normalize the coefficients of the score matrix, we obtain an associated mask:
\begin{align*}
& \textbf{M} = \frac{1}{r_{max}} \left[ \textbf{R} - r_{min}   \cdot (1)^{T \times d_X} \right], \\
& r_{min} = \min \left\{ r_{t,i} \mid (t,i) \in [1:T] \times [1:d_X]  \right\} \\
& r_{max} = \max \left\{ r_{t,i} \mid (t,i) \in [1:T] \times [1:d_X] \right\} 
\end{align*}
where $(1)^{T \times d_X}$ denotes a $T \times d_X$ matrix with all elements set to $1$. This mask can subsequently be used to compute the mask information content and entropy. In our experiments, we use this correspondence to compare our method with popular saliency methods.
\section{More details on the implementation} \label{sec:implementation_details}
\subsection{Algorithm}
\begin{algorithm}[ht]
   \caption{Dynamask}
   \label{alg-dynamask}
\begin{algorithmic}
   \STATE {\bfseries Input:} input sequence $ \textbf{X} \in \mathbb{R}^{T \times d_X} $, 		    black-box $f$, perturbation operator $\Pi$, mask area $a \in [0,1]$, learning rate 			$\eta \in \mathbb{R}^+$, momentum $\alpha \in \mathbb{R}^+$, initial size regulator $\lambda_0 \in \mathbb{R}^+$, regulator dilation $\delta \in \mathbb{R}_{\geq 1} $, time variation regulator $\lambda_c \in \mathbb{R}^+$ , number of epochs $N \in \mathbb{N}$ 
   \STATE {\bfseries Output:} mask $\textbf{M} \in [0,1]^{T \times d_X} $
   \STATE $ \textbf{M} \leftarrow (0.5)^{T \times d_X} $
   \STATE $ \textbf{r}_a \leftarrow (0)^{T \cdot d_X \cdot (1-a) } \oplus (1)^{T \cdot d_X \cdot a} $
   \STATE $\Delta \textbf{M} \leftarrow 0$
   \STATE $\lambda_a \leftarrow \lambda_0$
   \FOR{$i=1$ {\bfseries to} $N$}
   \STATE $ \tilde{\textbf{X}} \leftarrow \Pi_{\textbf{M}}\left( \textbf{X} \right) $
   \STATE Evaluate the error $\mathcal{L}_e \left( \textbf{M} \right)$ between 					$f(\textbf{X})$ and $f(\tilde{\textbf{X}})$
   \STATE $\mathcal{L}_a \left( \textbf{M} \right) \leftarrow \Vert \text{vecsort}(\textbf{M}) - \textbf{r}_{a} \Vert^2 $ 
   \STATE $\mathcal{L}_c \left( \textbf{M} \right) \leftarrow \sum_{i=1}^{d_X} \sum_{t=1}^{T-1} \vert m_{t+1,i} - m_{t,i} \vert $ 
   \STATE $\Delta \textbf{M} \leftarrow \eta \cdot \nabla_{\textbf{M}} \left[ 
   \mathcal{L}_e  + \lambda_a \mathcal{L}_a + \lambda_c \mathcal{L}_c \right] + \alpha \cdot \Delta \textbf{M} $
   \STATE $\textbf{M} \leftarrow \textbf{M} + \Delta \textbf{M}$
   \STATE $\textbf{M} \leftarrow \text{clamp}_{[0,1]} \left( \textbf{M} \right)$
   \STATE $\lambda_a \leftarrow \lambda_a \times \exp \left( \log \delta / N \right)$
   \ENDFOR
\end{algorithmic}
\end{algorithm}

The mask optimization algorithm is presented in Algorithm~\ref{alg-dynamask}. In the algorithm, we used the notation $(0.5)^{T \times d_X}$ for a $T \times d_X$ matrix with all elements set\footnote{By setting all the initial coefficients of the mask $\textbf{M}$ to $0.5$, we make no prior assumption on the saliency of each feature.} to $0.5$. Similarly, $(0)^{T \cdot d_X \cdot (1-a) }$ denotes a vector with $T \cdot d_X \cdot (1-a)$ components set to $0$ and $ (1)^{T \cdot d_X \cdot a} $ denotes a vector with $T \cdot d_X \cdot a $ components set to $1$. The symbol $\oplus$ denotes the direct sum between two vector spaces, which is equivalent to the concatenation in Algorithm~\ref{alg-dynamask}. The error part of the loss $\mathcal{L}_e$ depends on the task (regression or classification), as explained in Section~3 of the paper. The momentum and the learning rate are typically set to $1$, the number of epoch is typically $1000$.  We also use the clamp function, which is defined component by component as
\begin{align*}
\left[ \text{clamp}_{[0,1]}\left( \textbf{M} \right) \right]_{t,i} = \min \left[ \max \left( m_{t,i} , 0 \right)  , 1 \right].
\end{align*}
Finally, we note that the mask size regularization coefficient $\lambda_a$ grows exponentially during the optimization to reach a maximum value of $\delta \cdot \lambda_0$ at the end of the optimization. In practice, it is initialized to a small value (typically $\lambda_0 = 0.1$) and dilated by several order of magnitude during the optimization (typically $\delta = 1000$). In this way, the optimization procedure works in two times. At the beginning, the loss is dominated by the error $\mathcal{L}_e (\textbf{M})$ so that the mask increases the mask coefficients of salient features. As the regulation coefficient $\lambda_a$ increases, the regulation term becomes more and more important so that the mask coefficients are attracted to $0$ and $1$. At the end of the optimization, the mask is almost binary. 
\subsection{Deletion variant}
We notice that Algorithm~\ref{alg-dynamask} produces a mask that highlights the features that allow to reproduce the black-box prediction by keeping the error part of the loss $\mathcal{L}_e (\textbf{M})$ to be small. However, it is possible to highlight important features in another way. For instance, we could try to find the features that maximizes the prediction shift when perturbed. In this alternative formulation, the mask is obtained by solving the following optimization problem:
\begin{align*}
\tilde{\textbf{M}}_a^* = \argmin_{\textbf{M} \in [0,1]^{T \times d_X}} - \mathcal{L}_e \left(1 - \textbf{M} \right) + \lambda \cdot \mathcal{L}_a \left( \textbf{M} \right).
\end{align*}
Note that, in this case, the sign of the error part is flipped in order to maximizes the shift in the prediction. Moreover, the error is now evaluated for $1 - \textbf{M}$ rather than $\textbf{M}$. This is because important features are maximally perturbed in this case. In this way, a salient feature $x_{t,i}$ can keep a mask coefficient $m_{t,i}$ close to $1$ while being maximally perturbed. The regulator stays the same in this deletion variant, as the mask area still corresponds to the number of mask coefficients set to $1$. We use the deletion variant to obtain the masks in the experiment with clinical data in the main paper.
\section{More details on the experiments} \label{sec:experiment_details}
\subsection{Metrics}
We give the precise definition of each metric that appears in the experiments. Let us start with the metrics that are defined when the true importance is known.

\begin{definition}[AUP,AUR]
Let $\textbf{Q} = (q_{t,i})_{(t,i) \in [1:T] \times [1:d_X]}$ be a matrix in $\{0,1\}^{T \times d_X}$ whose elements indicate the true saliency of the inputs contained in $\textbf{X} \in \mathbb{R}^{T \times d_X}$. By definition, $q_{t,i} = 1$ if the feature $x_{t,i}$ is salient and $0$ otherwise. Let $\textbf{M} = (m_{t,i})_{(t,i) \in [1:T] \times [1:d_X]}$ be a mask in $[0,1]^{T \times d_X}$ obtained with a saliency method. Let $\tau \in (0,1)$ be the detection threshold for $m_{t,i}$ to indicate that the feature $x_{t,i}$ is salient. This allows to convert the mask into an estimator $\hat{\textbf{Q}}(\tau)=\left(\hat{q}_{t,i}(\tau)\right)_{(t,i) \in [1:T] \times [1:d_X]}$ for $\textbf{Q}$ via
\begin{align*}
\hat{q}_{t,i}(\tau) = \left\{
	\begin{array}{ll}
	1 & \text{ if } m_{t,i} \geq \tau \\
	0 & \text{ else.}
	\end{array}
\right.
\end{align*}
Consider the sets of truly salient indexes and the set of indexes selected by the saliency method
\begin{align*}
 A = & \left\{ (t,i) \in [1:T] \times [1:d_X] \mid q_{t,i} = 1 \right\} \\
 \hat{A} ( \tau ) = & \left\{ (t,i) \in [1:T] \times [1:d_X] \mid \hat{q}_{t,i} (\tau) = 1 \right\} .
\end{align*}
We define the precision and recall curves that map each threshold to a precision and recall score:
\begin{align*}
& \text{P} : (0,1)  \longrightarrow  [0,1] : \tau  \longmapsto  \frac{\vert A \cap \hat{A}(\tau) \vert}{\vert \hat{A}(\tau) \vert} \\
& \text{R} : (0,1)  \longrightarrow  [0,1] : \tau  \longmapsto  \frac{\vert A \cap \hat{A}(\tau) \vert}{\vert A \vert}.          
\end{align*} 
The AUP and AUR scores are the area under these curves
\begin{align*}
& \text{AUP} = \int_{0}^1 \text{P}(\tau) d\tau \\
& \text{AUR} = \int_{0}^1 \text{R}(\tau) d\tau.
\end{align*}
\end{definition}
\begin{remark}
Roughly speaking, we consider the identification of salient features as a binary classification task. Each saliency method can thus be seen as a binary classifier for which we compute the AUP and the AUR.
\end{remark}
\begin{remark}
Integrating over several detection thresholds allows to evaluate a saliency method with several levels of tolerance on what is considered as a salient feature.
\end{remark}

In our experiment with MIMIC-III, since the ground true feature importance is unknown, we use the following metrics defined for a binary classification problem\footnote{In our experiment, each input corresponds to a patient. Class $0$ indicates that the patient survives and class $1$ indicates that the patient dies.}.

\begin{definition}[CE, ACC] \label{definition-ce_acc}
Consider a classifier $f$ that maps the input $\textbf{X}$ to a probability $f(\textbf{X}) \in [0,1]$. Let $\tilde{\textbf{X}}$ be a perturbed input produced by a saliency method\footnote{In our experiment, we replace the most important features by the time average of the corresponding feature.}. We define the function that converts a probability into a class
\begin{align*}
\text{class}(p) = \left\{ \begin{array}{ll}	
	0 & \text{if } p < 0.5 \\
	1 & \text{else.}					
\end{array}
\right.
\end{align*}
To measure the shift in the classifier's prediction caused by the perturbation of the input for several test examples $\left\{ \textbf{X}_k \mid k \in [1:K] \right\}$, we use the binary cross-entropy (or log-loss)
\begin{align*}
\text{CE}  = - \frac{1}{K} \sum_{k=1}^{K} & \text{class}\left[ f (\textbf{X}_k) \right] \cdot \log  f\left( \tilde{\textbf{X}}_k \right) \\
 & + \left( 1 -  \text{class}\left[ f (\textbf{X}_k) \right] \right) \cdot \log \left[ 1 -   f\left( \tilde{\textbf{X}}_k \right) \right].
\end{align*}
\end{definition}
To measure the number of prediction flipped by the perturbation, we use the accuracy
\begin{align*}
\text{ACC} = \frac{\vert \left\{ k \in [1:K] : \text{class}\left[ f (\textbf{X}_k) \right] = \text{class}\left[ f (\tilde{\textbf{X}}_k) \right] \right\} \vert}{K}.
\end{align*}
We reproduce our experiment several times to get an average and a standard deviation for all of these metrics.

\subsection{Computing infrastructure}
All our experiments have been performed on a machine with Intel(R) Core(TM) i5-8600K CPU @ 3.60GHz [6 cores] and Nvidia GeForce RTX 2080 Ti GPU.

\subsection{Details on the rare experiment}

\textbf{Data generation} Since this experiment relies on a white-box, we only have to generate the input sequences. As we explain in the main paper, each feature sequence is generated with an ARMA process:
\begin{align*}
x_{t,i} = \varphi_1 \cdot x_{t-1, i} + \varphi_2 \cdot x_{t-2, i} + \varphi_3 \cdot x_{t-3, i} + \epsilon_t,
\end{align*}
with $\varphi_1=0.25$, $\varphi_2=0.1$, $\varphi_3=0.05$ and $\epsilon_t \sim \mathcal{N}(0,1)$. We generate one such sequence with $t \in [1:50]$ for each feature $i \in [1:50]$ by using the Python statsmodels library.

In the rare feature experiment, 5 features are selected as salient. Their indices are contained in $A_X$ and drawn uniformly without replacement from $[1:50]$. The salient times are defined as $A_T = [13:38]$. 

In the rare time experiment, 5 time steps are selected as salient. The initial salient time is drawn uniformly $t^*~\sim~\text{U}([1:46])$. The salient times are then defined as $A_T = [t^*:t^*+4]$. The salient features are defined as $A_X = [13:38]$.

\textbf{Mask fitting} For each time series, we fit a mask by using the temporal Gaussian blur $\pi^g$ as a perturbation operator with $\sigma_{max}=1$ and by using the squared error loss. A mask is fitted for each value of $a \in \left\{ (n+1) \cdot 10^{-3} \mid n \in [0:49] \right\}$. The mask $\textbf{M}^*_a$ with the lowest squared error $\mathcal{L}_e (\textbf{M}^*_a)$ is selected. The hyperparameters for this optimization procedure are $\eta = 1, \alpha = 1, \lambda_0 = 1, \delta = 1000, \lambda_c = 0, N = 1000$.

In our experiments, we don't consider $a > 0.05$. This is because we found experimentally that the error  $\mathcal{L}_e(\textbf{M}_{a}^*)$ generally reaches a plateau as $a$ gets closer to $0.05$, as illustrated in the examples from  Figures~\ref{illustration-rare_feature_area}~\&~\ref{illustration-rare_time_area}. This is consistent with the fraction of inputs that are truly salient since

\begin{align*}
\frac{\vert A \vert}{\vert [1:50]\times[1:50] \vert} = \frac{25 \cdot 5}{50 \cdot 50} = 0.05.
\end{align*}  

\begin{figure}[ht]
\vskip 0.2in
\begin{center}
\centerline{\includegraphics[width=\columnwidth]{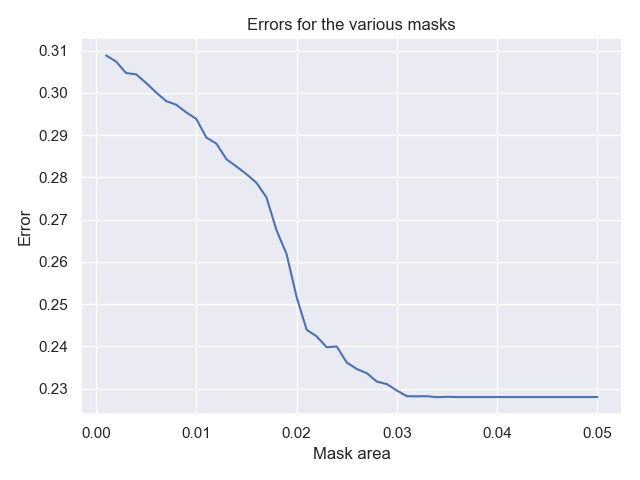}}
\caption{The error $\mathcal{L}_e(\textbf{M}_{a}^*)$ as a function of $a$.  We clearly see that the error stops decreasing when $a$ gets close to $0.05$. This group of masks are fitted on a time series from the rare feature experiment.}
\label{illustration-rare_feature_area}
\end{center}
\vskip -0.2in
\rule[1ex]{\columnwidth}{0.5pt}
\end{figure}

\begin{figure}[ht]
\vskip 0.2in
\begin{center}
\centerline{\includegraphics[width=\columnwidth]{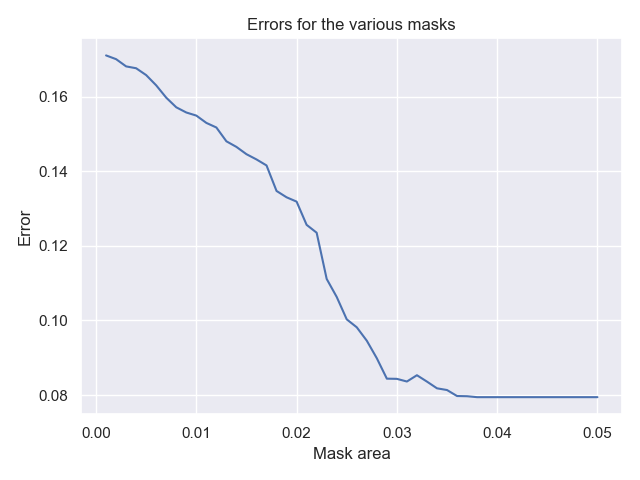}}
\caption{The error $\mathcal{L}_e(\textbf{M}_{a}^*)$ as a function of $a$. We clearly see that the error stops decreasing when $a$ gets close to $0.05$. This group of masks are fitted on a time series from the rare time experiment.}
\label{illustration-rare_time_area}
\end{center}
\vskip -0.2in
\rule[1ex]{\columnwidth}{0.5pt}
\end{figure}

\textbf{Runtime} For rare time, finding the best mask takes on average $15.7$s. For rare feature, finding the best mask takes on average $20.7$s.

\textbf{Illustrations} To illustrate the results of our experiments, we show the saliency masks produced by various methods for the rare feature experiment in Figure~\ref{illustration-rare_feature_example1}~\&~\ref{illustration-rare_feature_example2} and for the rare time experiment in Figure~\ref{illustration-rare_time_example1}~\&~\ref{illustration-rare_time_example2}. For all of these examples, we notice that Dynamask identifies a bigger portion of the truly salient inputs, which illustrates the bigger AUR reported in the main paper.

\subsection{Details on the state experiment}

\textbf{Data generation} The data generation is governed by a Hidden Markov Model (HMM). The initial distribution vector for this HMM is given by $\boldsymbol{\pi}= \left(0.5, 0.5 \right)$ and its transition matrix is
\begin{align*}
\textbf{C} = \begin{pmatrix}
0.1 & 0.9\\
0.1 & 0.9\\
\end{pmatrix}.
\end{align*}
At each time, the input feature vector has three components ($d_X = 3$) and is generated according to the current state via $\textbf{x}_t \sim \mathcal{N}\left( \boldsymbol{\mu}_{s_t}, \boldsymbol{\Sigma}_{s_t} \right)$ with mean vectors depending on the state: $\boldsymbol{\mu}_1= \left(0.1, 1.6, 0.5 \right)$ or $\boldsymbol{\mu}_2= \left(-0.1, -0.4, -1.5 \right)$. When it comes to the covariance matrices, only the off-diagonal terms differ from one state to another:
\begin{align*}
&\boldsymbol{\Sigma}_1 = \begin{pmatrix}
0.8 & 0 & 0\\
0 & 0.8 & 0.01\\
0 & 0.01 & 0.8\\
\end{pmatrix} \\
&\boldsymbol{\Sigma}_2 = \begin{pmatrix}
0.8 & 0.01 & 0\\
0.01 & 0.8 & 0\\
0 & 0 & 0.8 \\
\end{pmatrix}.
\end{align*} 
To each of these input vectors is associated a binary label $y_t \in \{ 0,1 \}$. This binary label is conditioned by one of the three component of the feature vector, based on the state:
\begin{align*}
p_{t} = \left\{ \begin{array}{cl}
	\left( 1 + \exp \left[-x_{2,t} \right] \right)^{-1} & \mbox{if } s_t = 0 \\
	\left( 1 + \exp \left[-x_{3,t} \right] \right)^{-1} & \mbox{if } s_t = 1 \\
	\end{array}
	\right. .	
\end{align*}
The length of each time series is fixed to 200 (T = 200). We generate 1000 such time series, 800 are used for model training and 200 for testing.

\textbf{Model training} We train a RNN with one layer made of 200 GRU cells trained using the Adam optimizer for 80 epochs ($\text{lr} = 0.001, \beta_1=0.9, \beta_2=0.999$ and no weight decay).

\textbf{Mask fitting} For each test time series, we fit a mask by using the temporal Gaussian blur $\pi^g$ as a perturbation operator with $\sigma_{max}=1$. A mask is optimized for each value of $a \in \left\{ 0.15 + 2n \cdot 10^{-2} \mid n \in [0:10] \right\}$. We keep the extremal mask for a threshold set to $\varepsilon = 0.9 \cdot \mathcal{L}_e\left(\textbf{M} = \left( 1 \right)^{T \times d_X} \right)$. The hyperparameters for this optimization procedure are $\eta = 1, \alpha = 1, \lambda_0 = 0.1, \delta = 100, \lambda_c = 1, N = 1000$.

\textbf{Runtime} Finding the extremal mask for a given input takes $49.8$s on average.

\textbf{Illustrations} To illustrate the results of our experiments, we show the saliency masks produced by various methods on Figure~\ref{illustration-example111}~\&~\ref{illustration-example19}. By inspecting these figures, we notice that only Dynamask and Integrated Gradients seem to produce saliency maps where the imprint of the true saliency can be distinguished. One advantage of Dynamask is the contrast put between these salient inputs and the rest. In the case of Integrated Gradients, we see that many irrelevant inputs are assigned an important saliency score, although smaller than the truly salient inputs. This is because gradients are computed individually, without the goal of achieving parsimonious feature selection. \\ 
In addition, we have reported the mask entropy for each of the methods. As claimed in the main paper, Dynamask produces mask that have significantly lower entropy. Among the methods that produce masks with high entropy, we notice two trends. Some methods, such as RETAIN, produce masks where a significant portion of the inputs are assigned a mask entropy close the $0.5$. As discussed in the main paper, these significance of the saliency scores is limited in this situation, since no clear contrast can be drawn between the saliency of different inputs. On the other hand, some methods like FIT produce masks with many different masks coefficients, which renders the saliency map somewhat fuzzy. In both cases, the high entropy detects these obstructions for legibility. 

\subsection{Details on the mimic experiment}
\textbf{Data preprocessing} The data preprocessing used here is precisely the same as the one described in~\citep{Tonekaboni2020}, we summarize it here for completeness. We use the adult ICU admission data from the MIMIC-III dataset~\citep{Johnson2016}. For each patient, we use the features Age, Gender, Ethnicity, First Admission to the ICU, LACTATE, MAGNESIUM, PHOSPHATE, PLATELET, POTASSIUM, PTT, INR, PR, SODIUM, BUN, WBC, HeartRate, DiasBP, SysBP, RespRate, SpO2, Glucose, Temp (in total, $d_X = 31$). The time series data is converted in 48 hour blocks ($T=48$) by averaging all the measurements over each hour block. The patients with all 48 hour blocks missing for a specific features are excluded, this results in 22,9888 ICU admissions. Mean imputation is used when HeartRate, DiasBP, SysBP, RespRate, SpO2, Glucose, Temp are missing. Forward imputation is used when LACTATE, MAGNESIUM, PHOSPHATE, PLATELET, POTASSIUM, PTT, INR, PR, SODIUM, BUN, WBC are missing. All features are standardized and the label is a mortality probability score in [0, 1]. The resulting dataset is split into a training set ($65 \%$), a validation set ($15 \%$) and a test set ($20 \%$).

\textbf{Model training} The model that we train is a RNN with a single layer made of 200 GRU cells. It is trained for 80 epochs with an Adam optimizer ($\text{lr} = 0.001, \beta_1=0.9, \beta_2=0.999$ and no weight decay).

\textbf{Mask fitting} For each test patient, we simply fit a mask with $a = 0.1$ by maximizing the cross-entropy loss in the deletion variant formulation of Dynamask. We use the fade-to-moving average perturbation $\pi^m$ with $W = 48$. The hyperparameters for this optimization procedure are $\eta = 1, \alpha = 1, \lambda_0 = 0.1, \lambda_c = 0, \delta = 1000, N = 1000$.

\textbf{Runtime} Fitting a mask for a given patient takes $3.58$ s on average.

\textbf{Illustrations} To illustrate the results of our experiments, we show the $10\%$ most important features for patients that are predicted to die on Figure~\ref{illustration-patient64}~\&~\ref{illustration-patient144} and for patients that are predicted to survive on Figure~\ref{illustration-patient13}~\&~\ref{illustration-patient666}. In each case, we indicate the cross-entropy between the unperturbed and the perturbed prediction, as defined in Definition~\ref{definition-ce_acc}. We note that Dynamask identifies the features that create the biggest shift. Qualitatively, we note that Dynamask seems to focus much more on the input that appear at latter time. This is consistent with the observations in~\cite{Ismail2019}: these inputs are the most important for the black-box, since it is trained to predict the mortality after the $48$ hours and RNNs have short term memory. 

\subsection{Influence of the perturbation operator}
\begin{table}
\caption{Influence of perturbation operator.}\label{tab-pert}
\vspace{.5cm}
\begin{center}
\resizebox{.5\columnwidth}{!}{
\begin{tabular}{c|ccc}\toprule  
Acc & $\pi^g$ & $\pi^m$ & $\pi^p$  \\ \hline
$\pi^g$ & 1 & .85 &  .80  \\  
$\pi^m$ & .85 & 1 & .80  \\  
$\pi^p$ & .80 & .80 & 1  \\   \midrule
AUROC & .90 & .90 & .86  \\ \bottomrule
\end{tabular}}
\end{center}
\end{table} 
To study the effect of the perturbation operator choice, we have performed the following experiment: in the set-up of the state experiment, we optimize 100 masks on distinct examples by using a Gaussian blur perturbation $(\pi^g, \sigma_{max} = 1)$ and a fade-to-moving average perturbation $(\pi^m, W=3 )$. We do the same with a fade-to-past average perturbation that only uses past values of the features: $(\pi^p, W=6)$. For each pair of perturbation operators, we compute the average accuracy between the associated masks (i.e. the fraction of inputs where both masks agree). For each method, we report the AUROC for the identification of true salient inputs. The results are reported in Table~\ref{tab-pert}. We observe that $\pi^g, \pi^m, \pi^p$ generally agree and offer similar performances.

\begin{figure*}[ht]
\vskip 0.2in
\begin{center}
\centerline{\includegraphics[width=\textwidth]{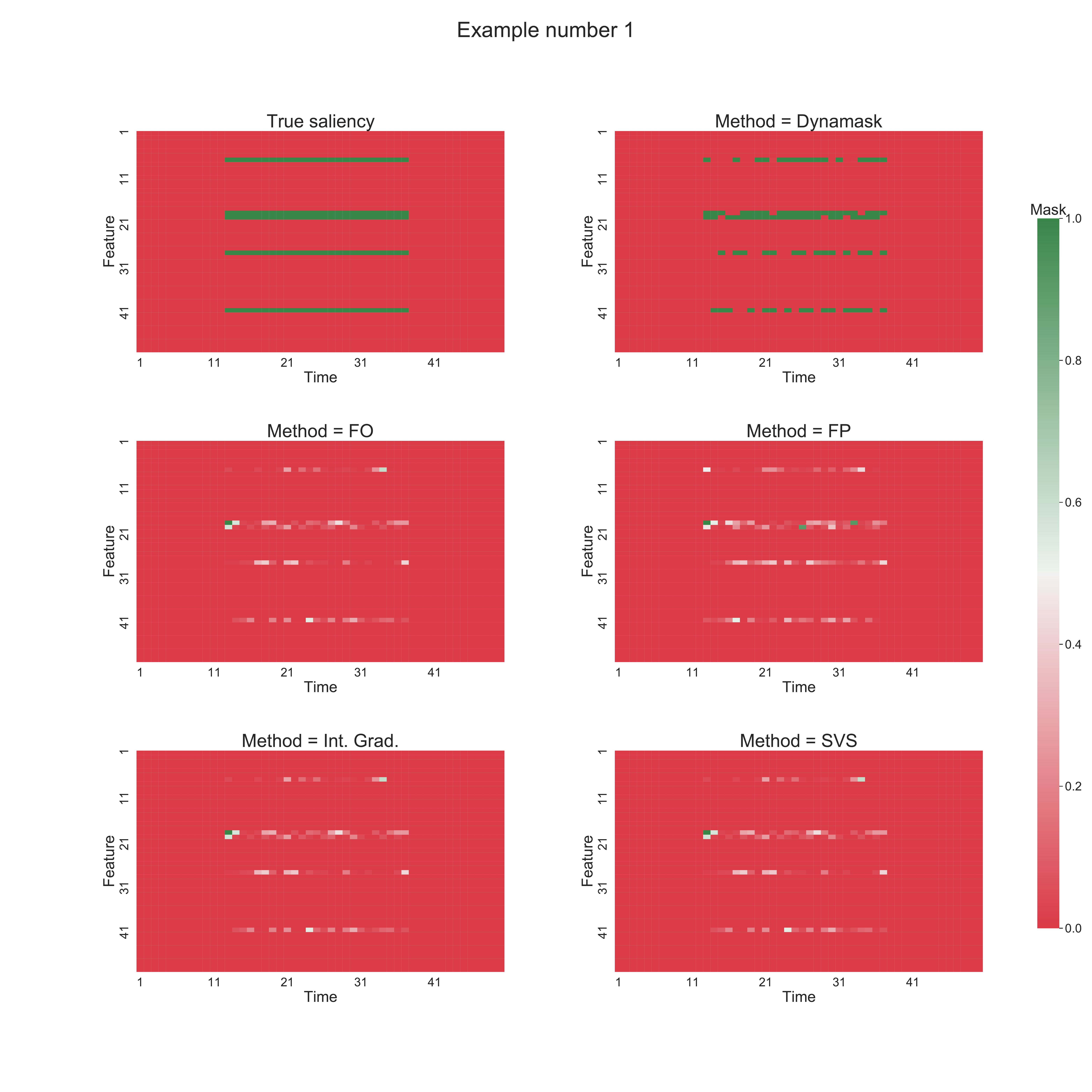}}
\caption{Saliency masks produced by various methods for the test example 1 of the rare feature experiment.}
\label{illustration-rare_feature_example1}
\end{center}
\vskip -0.2in
\rule[1ex]{\textwidth}{0.5pt}
\end{figure*}

\begin{figure*}[ht]
\vskip 0.2in
\begin{center}
\centerline{\includegraphics[width=\textwidth]{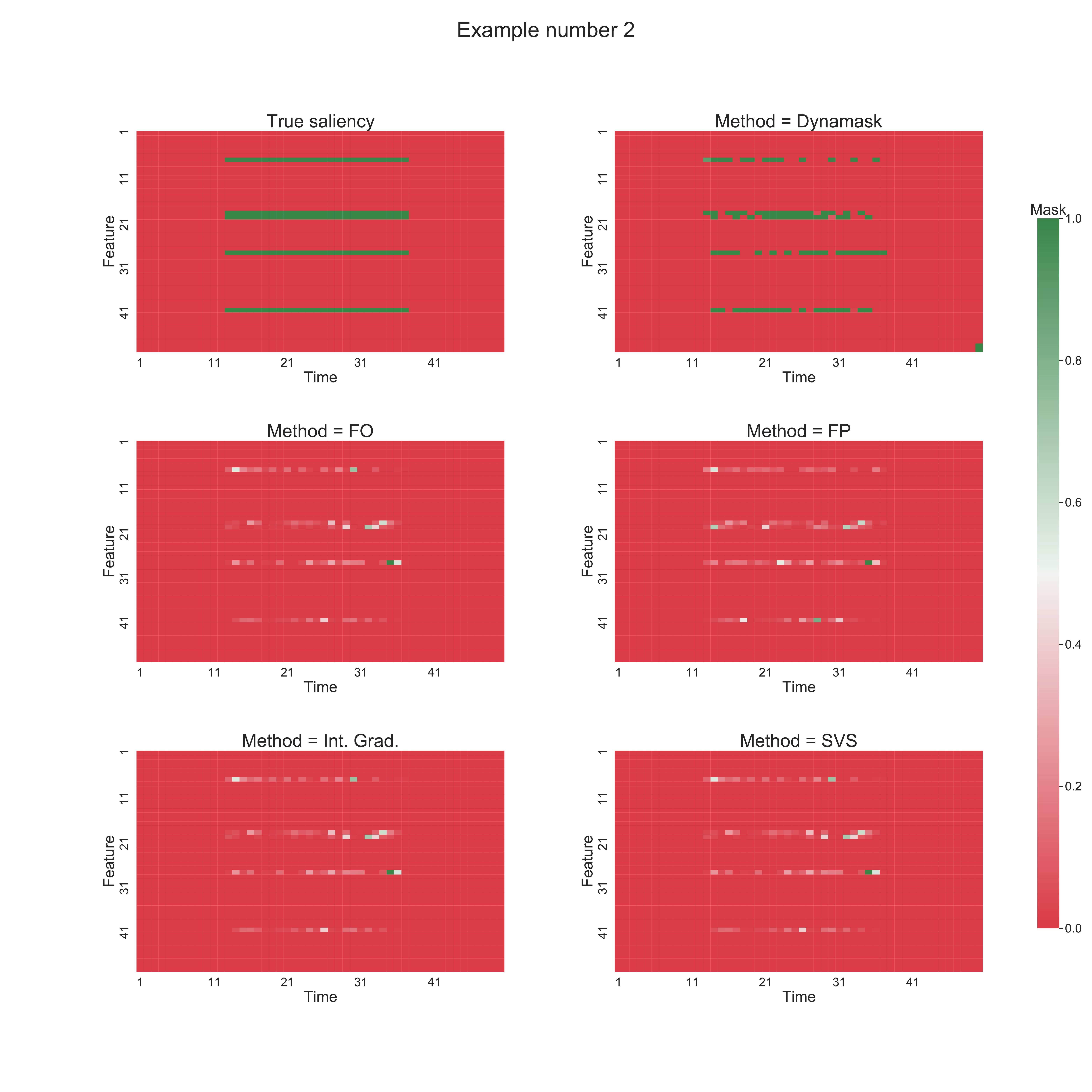}}
\caption{Saliency masks produced by various methods for the test example 2 of the rare feature experiment.}
\label{illustration-rare_feature_example2}
\end{center}
\vskip -0.2in
\rule[1ex]{\textwidth}{0.5pt}
\end{figure*}

\begin{figure*}[ht]
\vskip 0.2in
\begin{center}
\centerline{\includegraphics[width=\textwidth]{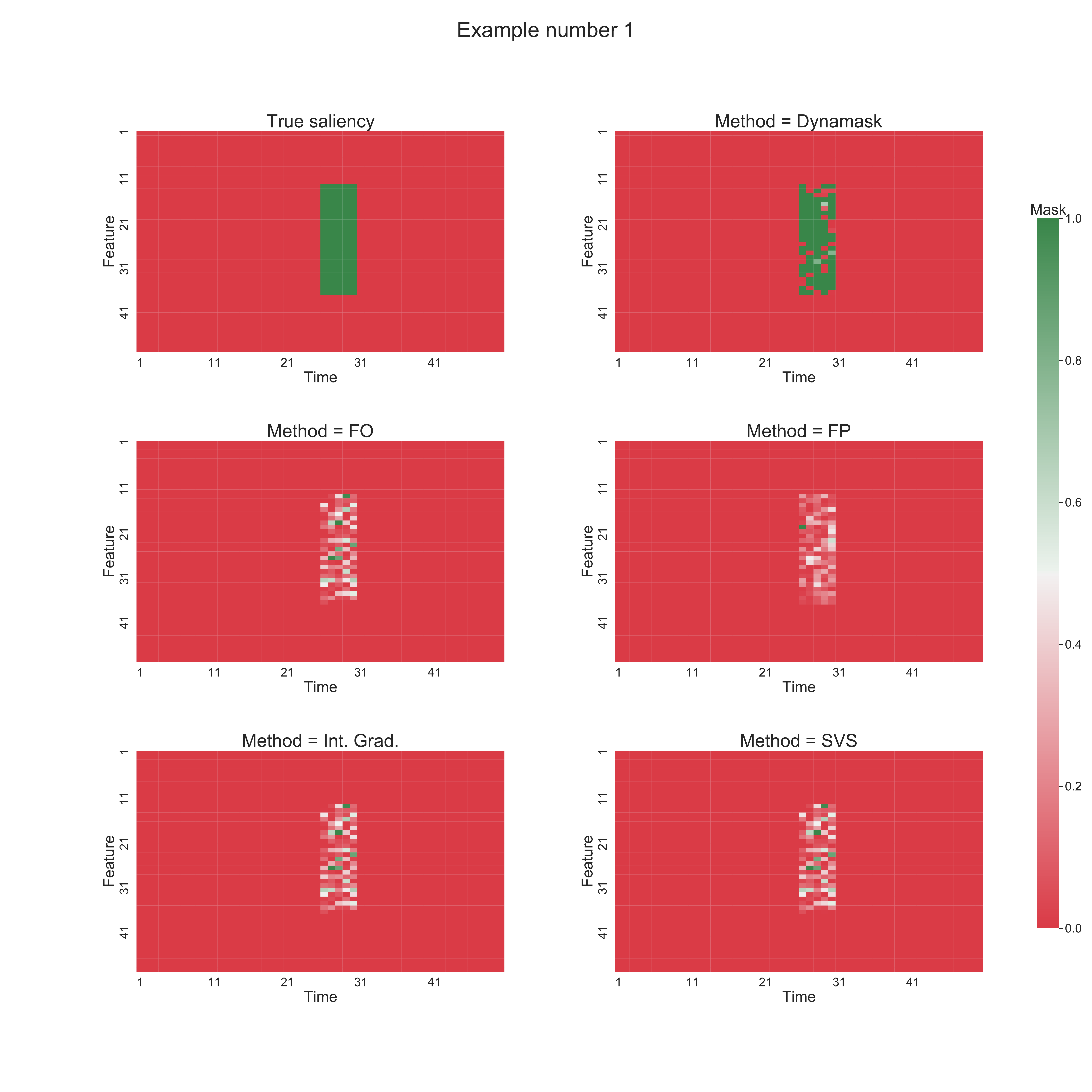}}
\caption{Saliency masks produced by various methods for the test example 1 of the rare time experiment.}
\label{illustration-rare_time_example1}
\end{center}
\vskip -0.2in
\rule[1ex]{\textwidth}{0.5pt}
\end{figure*}

\begin{figure*}[ht]
\vskip 0.2in
\begin{center}
\centerline{\includegraphics[width=\textwidth]{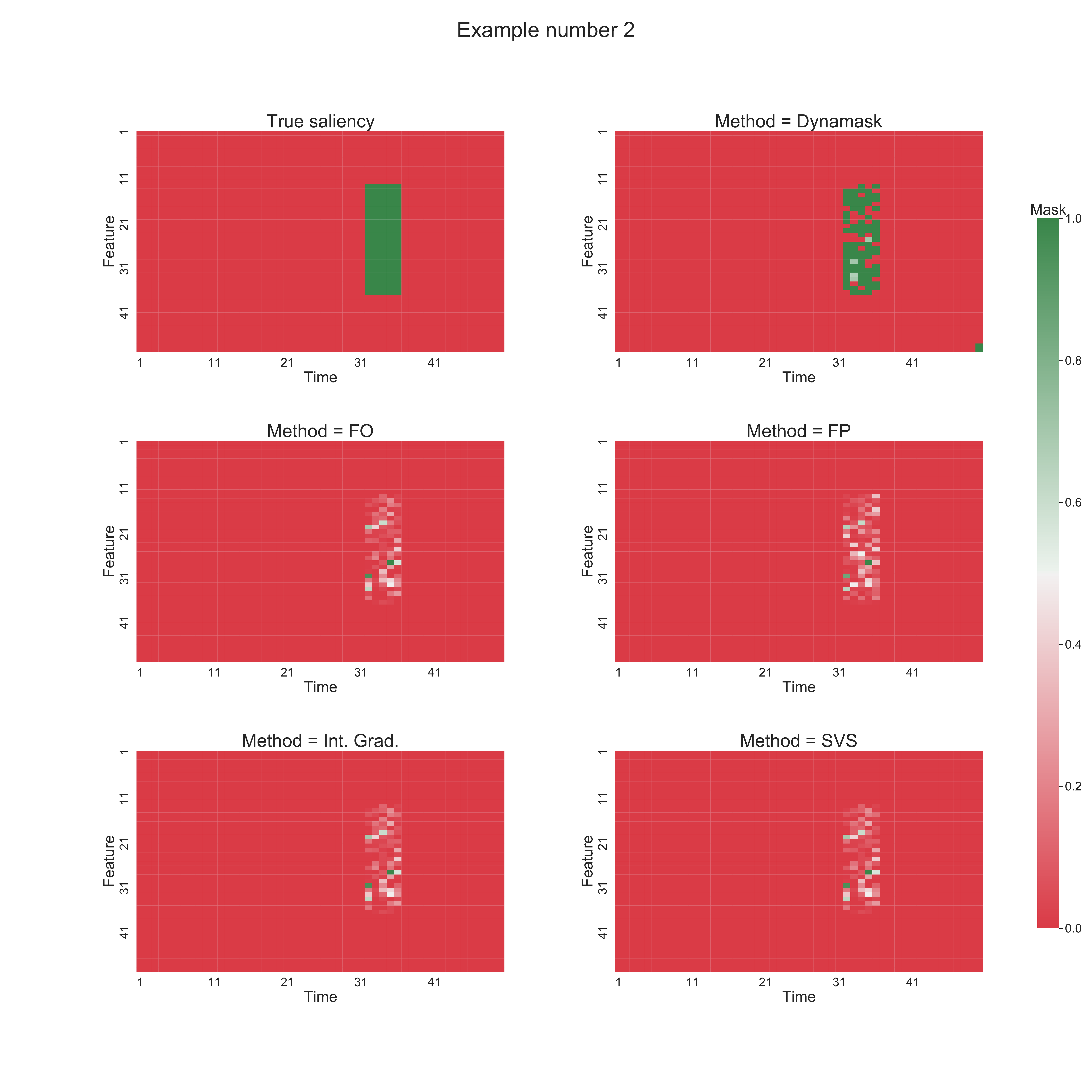}}
\caption{Saliency masks produced by various methods for the test example 2 of the rare time experiment.}
\label{illustration-rare_time_example2}
\end{center}
\vskip -0.2in
\rule[1ex]{\textwidth}{0.5pt}
\end{figure*}

\begin{figure*}[ht]
\vskip 0.2in
\begin{center}
\centerline{\includegraphics[width=\textwidth]{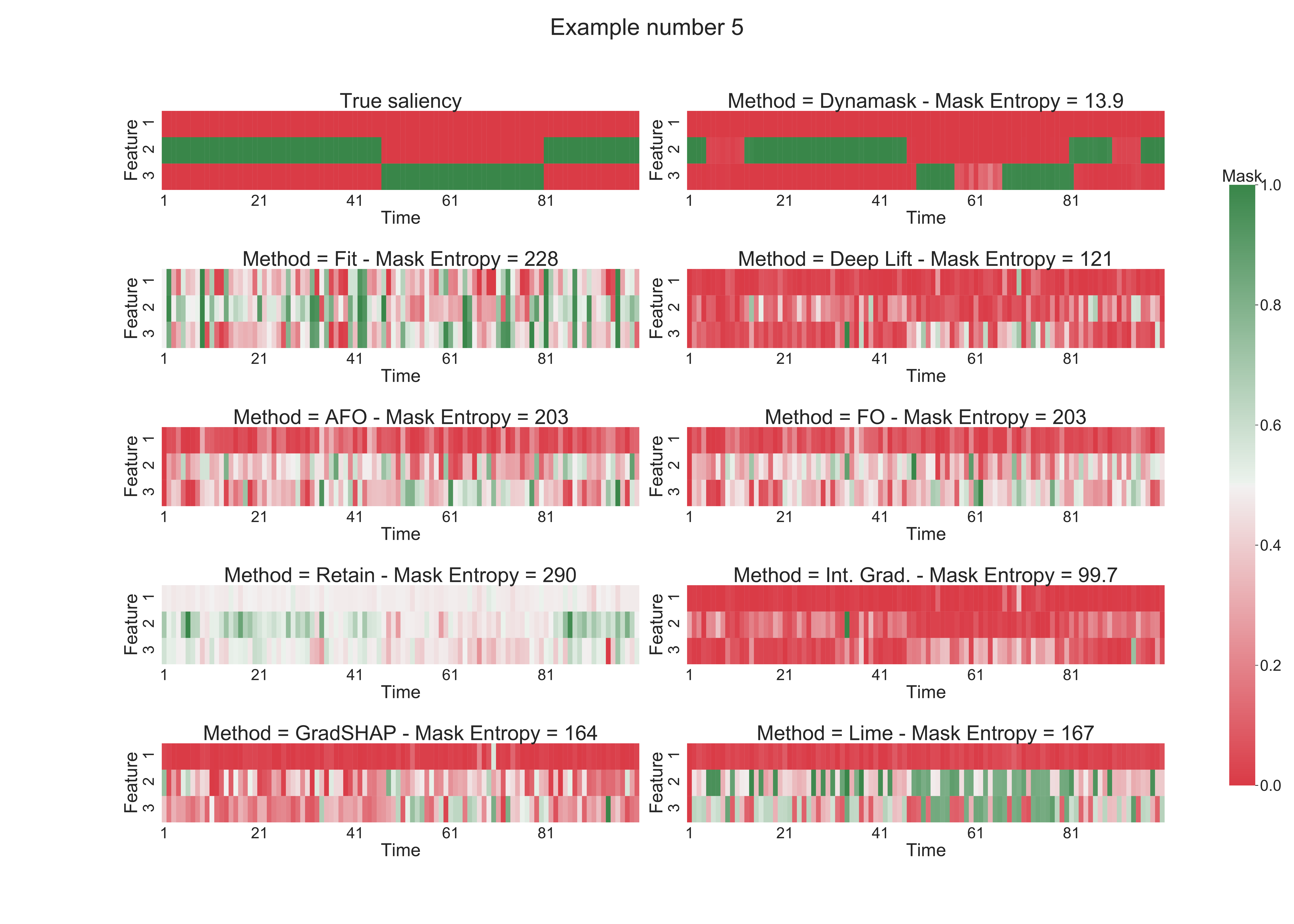}}
\caption{Saliency masks produced by various methods for the test example 5 of the state experiment. For each method, the global entropy of the mask $S_{\textbf{M}} \left( [1:100]\times[1:3] \right)$ is reported.}
\label{illustration-example111}
\end{center}
\vskip -0.2in
\rule[1ex]{\textwidth}{0.5pt}
\end{figure*}

\begin{figure*}[ht]
\vskip 0.2in
\begin{center}
\centerline{\includegraphics[width=\textwidth]{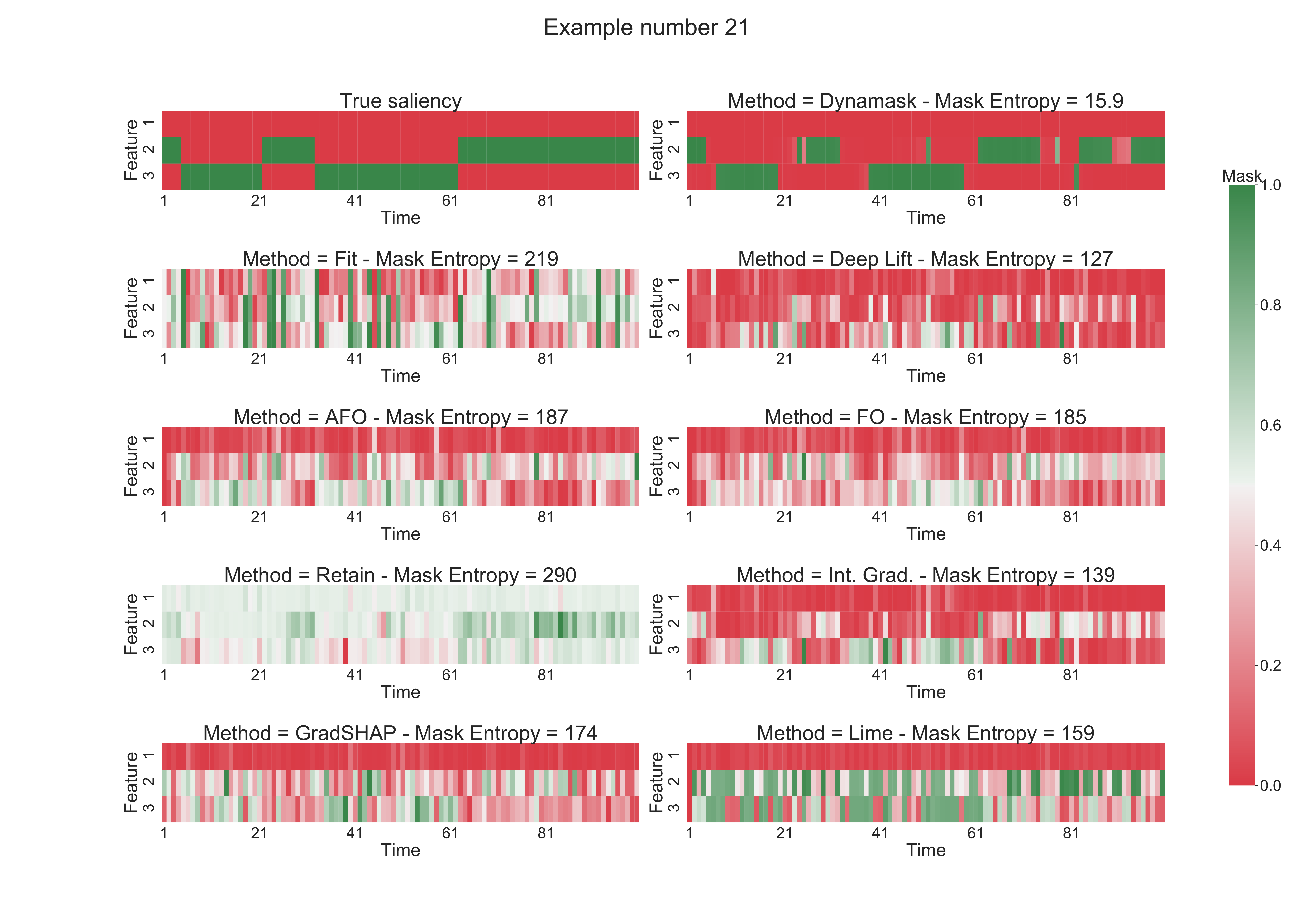}}
\caption{Saliency masks produced by various methods for the test example 21 of the state experiment. For each method, the global entropy of the mask $S_{\textbf{M}} \left( [1:100]\times[1:3] \right)$ is reported.}
\label{illustration-example19}
\end{center}
\vskip -0.2in
\rule[1ex]{\textwidth}{0.5pt}
\end{figure*}

\newpage

\begin{figure*}[ht]
\vskip 0.2in
\begin{center}
\centerline{\includegraphics[width=\textwidth]{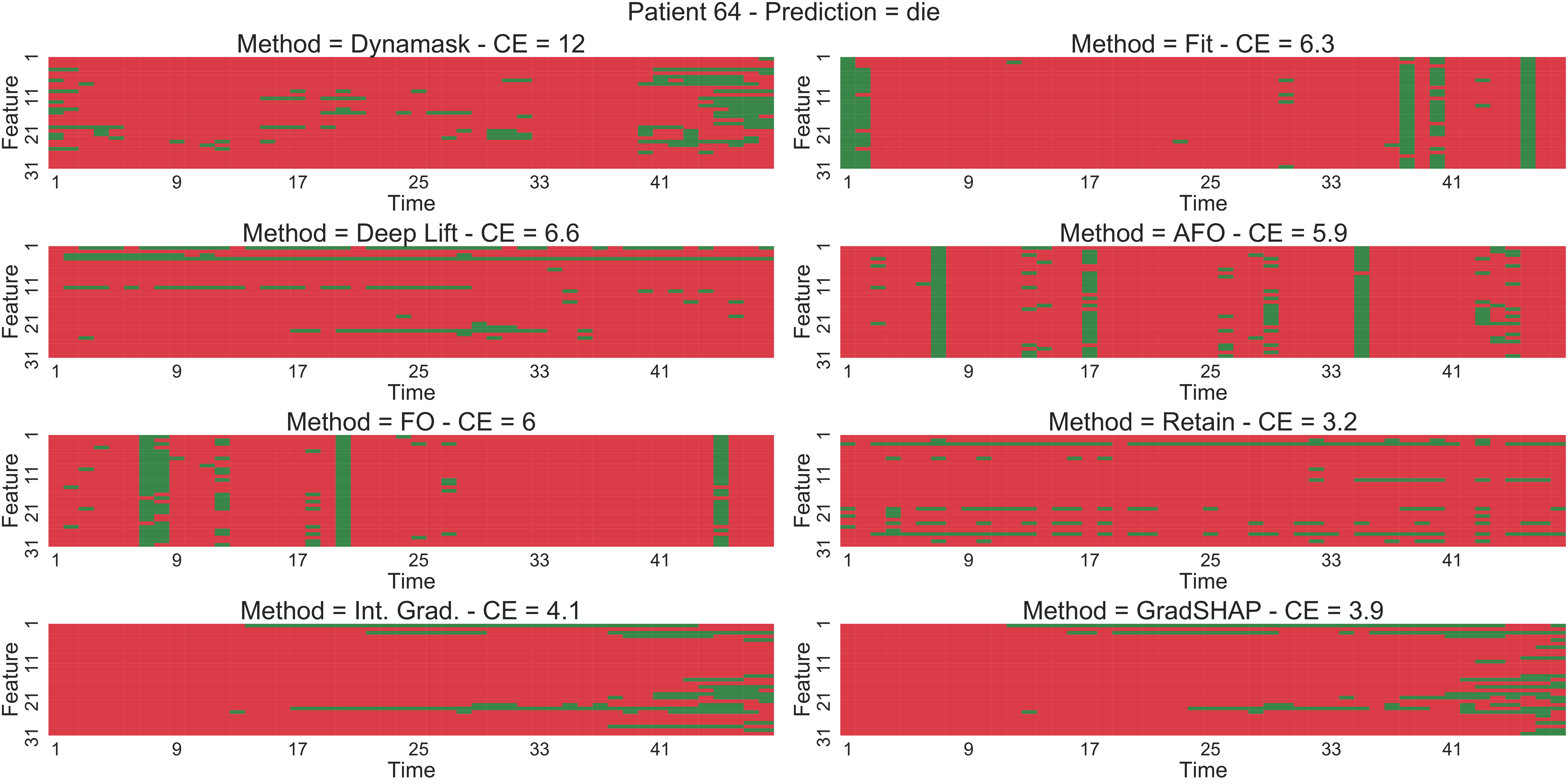}}
\caption{Most important inputs for patient 64. For each saliency method, the $10\%$ most important inputs are represented in green. The black-box predicts that this patient will die. In each case, the cross entropy (CE) between the unperturbed and the perturbed prediction is reported.}
\label{illustration-patient64}
\end{center}
\vskip -0.2in
\rule[1ex]{\textwidth}{0.5pt}
\end{figure*}

\newpage

\begin{figure*}[ht]
\vskip 0.2in
\begin{center}
\centerline{\includegraphics[width=\textwidth]{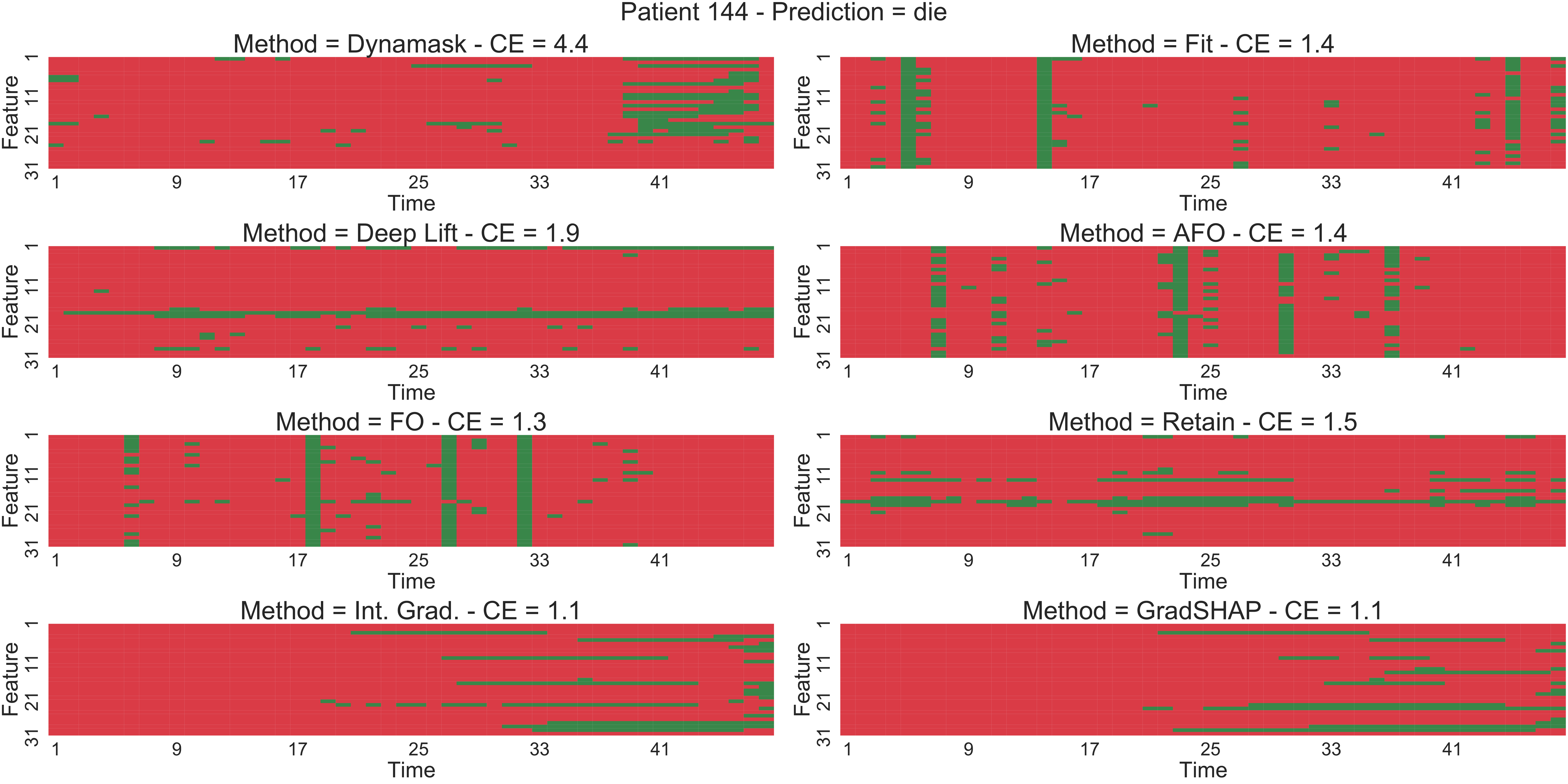}}
\caption{Most important inputs for patient 144. For each saliency method, the $10\%$ most important inputs are represented in green. The black-box predicts that this patient will die. In each case, the cross entropy (CE) between the unperturbed and the perturbed prediction is reported.}
\label{illustration-patient144}
\end{center}
\vskip -0.2in
\rule[1ex]{\textwidth}{0.5pt}
\end{figure*}

\newpage

\begin{figure*}[ht]
\vskip 0.2in
\begin{center}
\centerline{\includegraphics[width=\textwidth]{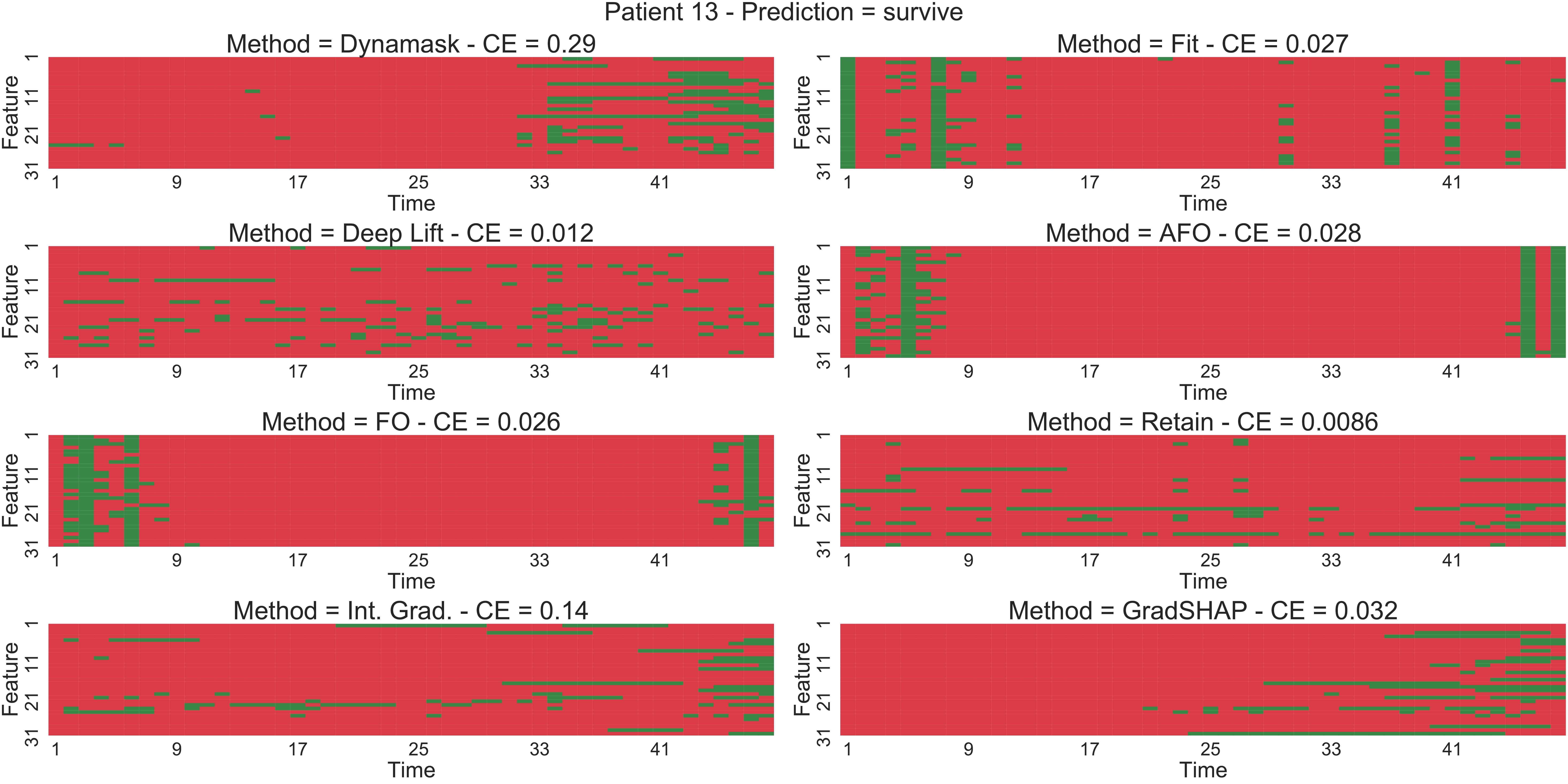}}
\caption{Most important inputs for patient 13. For each saliency method, the $10\%$ most important inputs are represented in green. The black-box predicts that this patient will survive. In each case, the cross entropy (CE) between the unperturbed and the perturbed prediction is reported.}
\label{illustration-patient13}
\end{center}
\vskip -0.2in
\rule[1ex]{\textwidth}{0.5pt}
\end{figure*}

\begin{figure*}[ht]
\vskip 0.2in
\begin{center}
\centerline{\includegraphics[width=\textwidth]{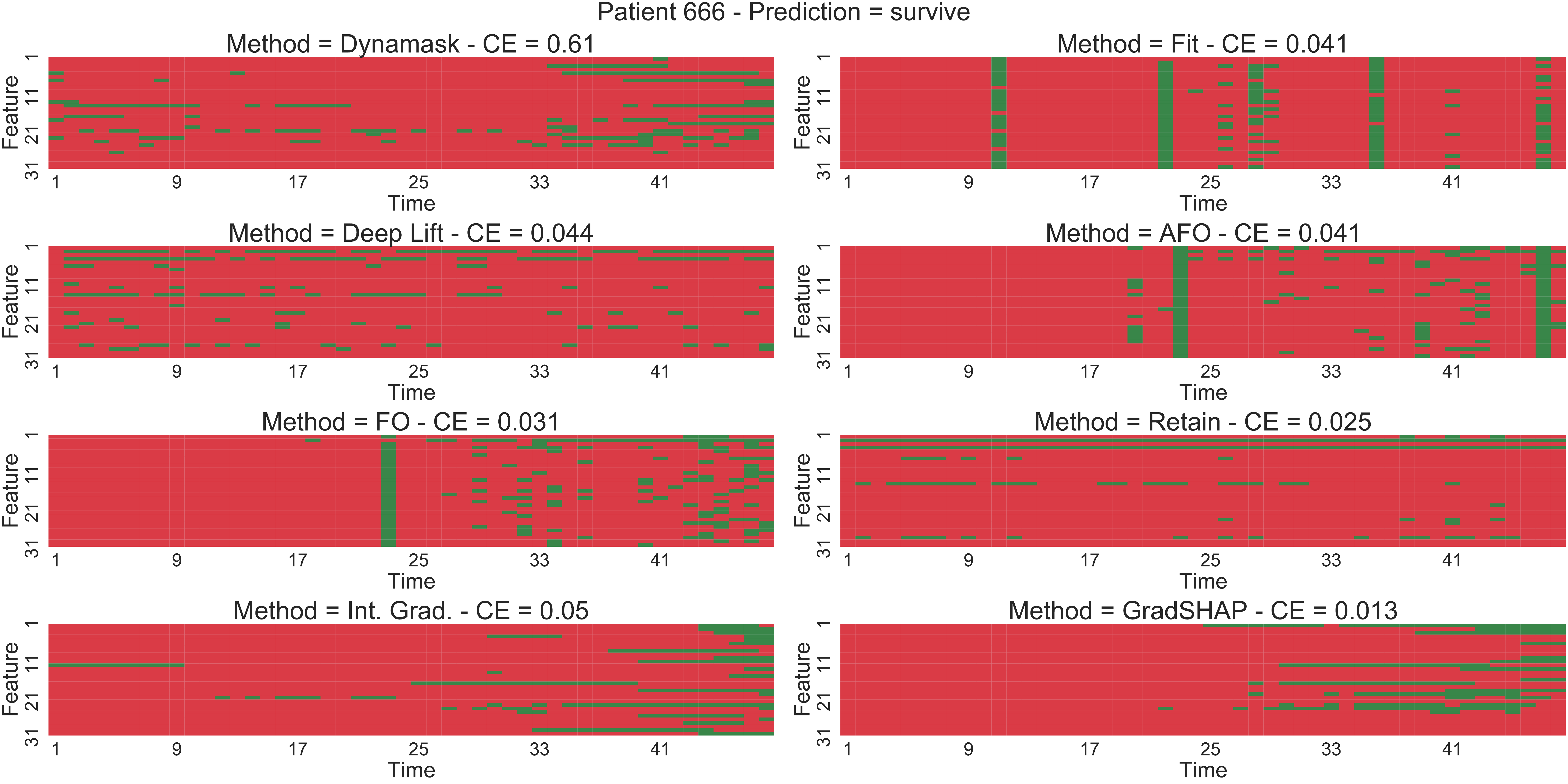}}
\caption{Most important inputs for patient 666. For each saliency method, the $10\%$ most important inputs are represented in green. The black-box predicts that this patient will survive. In each case, the cross entropy (CE) between the unperturbed and the perturbed prediction is reported.}
\label{illustration-patient666}
\end{center}
\vskip -0.2in
\rule[1ex]{\textwidth}{0.5pt}
\end{figure*}

\end{document}